\documentclass[11pt]{article}

\usepackage[final]{acl}

\usepackage{times}
\usepackage{latexsym}

\usepackage[T1]{fontenc}

\usepackage[utf8]{inputenc}

\usepackage{microtype}

\usepackage{inconsolata}

\usepackage{graphicx}

\usepackage{amsmath}
\usepackage[capitalise]{cleveref}
\usepackage{amssymb}
\usepackage{bbm}
\usepackage{booktabs}
\usepackage{tcolorbox}
\usepackage{xspace}
\usepackage{subcaption}
\usepackage{multirow}
\usepackage{pgfplots}
\pgfplotsset{compat=1.18}
\usepackage{amsthm}
\usepackage{thmtools}
\usepackage{thm-restate}
\usepackage{algorithm}
\usepackage{algpseudocode}

\definecolor{quorum}{HTML}{DC267F}
\definecolor{random}{HTML}{785EF0}
\definecolor{pac}{HTML}{648FFF}
\definecolor{llm}{HTML}{FE6100}
\definecolor{llm_human}{HTML}{FFB000}

\newtheorem{theorem}{Theorem}[section]
\newtheorem{proposition}[theorem]{Proposition}

\newcommand{\quorum}{QUORUM\xspace}
\newcommand{\approach}{\textbf{\quorum}\xspace}
\everypar\expandafter{\the\everypar\looseness=-1}

\title{\approach: QUality-Optimized Routing Using Multiple annotators}

\author{
 \textbf{Antonio Purificato\textsuperscript{1,2}},
 \textbf{Maria Sofia Bucarelli\textsuperscript{3,$\star$}},
 \textbf{Andrea Bacciu\textsuperscript{1}},\\
 \textbf{Amin Mantrach\textsuperscript{1}},
 \textbf{Fabrizio Silvestri\textsuperscript{2}}
\\
\\
 \textsuperscript{1} Amazon,
 \textsuperscript{2} Sapienza University of Rome,
 \textsuperscript{3}Université Côte d’Azur, CNRS, Inria, I3S
\\
 \small{
   \textbf{Correspondence:} \href{mailto:email@domain}{purifian@amazon.com}
 }
}

\begin{document}
\maketitle
\begingroup
\renewcommand{\thefootnote}{}
\footnotetext{$^{\star}$ Work done while at Sapienza.}
\endgroup
\begin{abstract}
Data annotation remains a central bottleneck in natural language processing, requiring human effort to obtain high-quality labels at scale. While Large Language Models (LLMs) offer a fast and cost-effective alternative, their reliability is highly instance-dependent: they perform well on simple inputs but often fail on examples requiring nuanced reasoning or contextual understanding.
In this work, we address this challenge with \approach (\textbf{QU}ality-\textbf{O}ptimized \textbf{R}outing \textbf{U}sing \textbf{M}ultiple annotators), a budget-aware routing framework that dynamically assigns each instance to human or LLM annotators under a fixed annotation budget.
Unlike prior approaches relying on model confidence or uncertainty estimates, \quorum leverages feature-based signals to estimate instance difficulty and supports multiple annotations per instance, combining them through agreement-based rewards to improve reliability.
We evaluate \quorum across diverse closed- and open-ended annotation tasks in English and multilingual settings, and \quorum improves annotation quality by up to 34.4\% while reducing costs by 8.8\% over competing methods. Code can be found at \url{https://github.com/amazon-science/QUORUM}.
\end{abstract}

\section{Introduction}

\begin{figure}[ht]
\centering
\begin{tikzpicture}

\begin{axis}[
    width=0.9\columnwidth,
    height=4cm,
    grid=both,
    xlabel={Cost},
    ylabel={Accuracy},
    xmin=530, xmax=620,
    ymin=0.75, ymax=0.97,
    legend style={
        at={(0.5,-0.42)},
        anchor=north,
        legend columns=2,
        /tikz/every even column/.append style={column sep=0.5cm}
    },
    legend cell align=left,
]


\addplot[
    only marks,
    mark=10-pointed star,
    mark size=4.5pt,
    color=llm_human
] coordinates {
    (608,0.922)
};
\addlegendentry{CoAnnotating}

\addplot[
    only marks,
    mark=diamond*,
    mark size=3.5pt,
    color=pac
] coordinates {
    (608,0.903)
};
\addlegendentry{HyPAC}

\addplot[
    only marks,
    mark=square*,
    mark size=3.5pt,
    color=random
] coordinates {
    (555,0.804)
};
\addlegendentry{Random}

\addplot[
    only marks,
    mark=*,
    mark size=3.5pt,
    color=quorum
] coordinates {
    (555,0.936)
};
\addlegendentry{\quorum}

\end{axis}

\end{tikzpicture}
\caption{Pareto frontier of cost versus performance on AG's News dataset with 5,320 human annotations. The standalone LLM achieves 0.834 average accuracy. Costs denote cumulative annotation expenses.}
\label{fig:costs}
\end{figure}

Human annotation is costly and difficult to scale across domains and languages\footnote{\url{sourcebae.com/blog/what-is-data-annotation}}. LLMs offer a faster and cheaper alternative, although annotation quality can vary substantially~\citep{shimizu2025exploring,tan-etal-2024-large}. In practice, some instances can be reliably annotated by LLMs, whereas others require nuanced reasoning, domain expertise, or cultural understanding~\citep{hendrycksmeasuring}. The challenge is therefore no longer whether humans or LLMs should annotate data, but \textit{how to allocate annotation effort between them under limited budgets}.
A natural solution is to dynamically route instances between annotators. Existing approaches largely rely on uncertainty estimation, using model confidence to decide when human supervision is required~\citep{li2023coannotating,gligoric2025can}. While effective, these methods often depend on calibrated confidence scores, additional inference steps, or model-specific uncertainty signals, increasing computational overhead and limiting applicability across annotators~\citep{cecere2025monte}.

In this work, we propose \approach (\textbf{QU}ality-\textbf{O}ptimized \textbf{R}outing \textbf{U}sing \textbf{M}ultiple annotators), a budget-aware routing framework that dynamically assigns instances to human or LLM annotators to jointly optimize annotation quality and cost. Unlike prior approaches, \quorum does not rely on uncertainty estimation. Instead, it learns routing policies from contextual and budget signals, deciding not only \emph{who} should annotate an instance, but also \emph{whether additional annotations are worth their cost}. This naturally enables multi-annotator strategies, where multiple LLMs or humans can be combined when agreement is insufficient. In fact, the name \quorum is inspired by collective decision-making, where a \textit{quorum} represents the minimum agreement required to reach a reliable outcome \footnote{\url{dictionary.cambridge.org/dictionary/english/quorum}}. Similarly, \quorum aggregates annotations from humans and LLMs, dynamically requesting additional supervision only when needed.

\Cref{fig:costs} previews this trade-off on the AG's News dataset: \quorum achieves the highest annotation quality while matching the cost of the cheapest competing methods, placing it on the Pareto frontier. Our main contributions are:
\begin{itemize}
    \item We propose \quorum, a budget-aware routing framework for allocating annotations between human and LLM annotators, supporting multi-annotator aggregation when needed.
    \item We provide a theoretically grounded formulation of budget-aware routing, with guarantees on posterior concentration and asymptotic convergence toward oracle routing.
    \item \quorum achieves up to 34.4\% relative improvement in annotation quality while reducing cost by up to 8.8\% across English and multilingual open- and closed-ended tasks.
\end{itemize}

\section{Related Work}
\label{sec:related_work}

Recent work on human–LLM collaboration for annotation has increasingly focused on balancing annotation quality and cost through adaptive routing strategies~\citep{tan-etal-2024-large}.
A common starting point is the use of uncertainty or confidence signals to decide whether a sample should be handled by a model or escalated to a human annotator~\citep{purificato2026select}.

\paragraph{Uncertainty-based routing.}
Early approaches, such as those proposed by~\citet{wang2021want} and extended by~\citet{wang2024human}, rely on model-provided confidence scores (\textit{e.g.}, logits from LLM APIs) to determine whether an LLM prediction is sufficiently reliable. Samples with low confidence are routed to human annotators. While intuitive, this strategy has two main limitations: (i) it depends on the availability of calibrated confidence scores, which are not consistently provided across models, and (ii) it requires an additional model inference step before routing decisions can be made, introducing non-negligible computational overhead.
To overcome these limitations, subsequent work has proposed alternative uncertainty estimation mechanisms.
~\citet{li2023coannotating} introduce a human–LLM co-annotation framework that models uncertainty via self-evaluated confidence scores and entropy over multiple model outputs, formulating the problem as a multi-objective optimization task over annotation quality and cost. Similarly,~\citet{gligoric2025can} propose Confidence Driven Inference (CDI), where models explicitly verbalize confidence estimates that are then used to decide when human supervision is required.
Finally,~\citet{zeng2026hypac} propose HyPAC, a  framework with PAC-style guarantees that partitions the input space using uncertainty and applies importance sampling with upper confidence bounds to optimize annotation allocation under budget constraints.
\paragraph{Alternative routing strategies.}
Beyond confidence-based methods, other approaches incorporate external knowledge or learned error signals to improve routing decisions.
~\citet{huang2024araida} propose ARAIDA, which augments model predictions with retrieval from nearest labeled neighbors and adaptively weights model and retrieval signals based on estimated error. In a related direction,~\citet{huang2024selective} introduce SANT, which integrates active learning with error-aware triage mechanisms to prioritize samples for human annotation under limited budgets.
More recent work explores learning-based and theoretically grounded routing policies.
~\citet{miranda2025hybrid} learn a performance prediction model that selects the optimal combination of human and LLM annotations based on a preference dataset, effectively learning routing decisions from data.

Overall, \quorum avoids dependence on model-derived uncertainty and additional inference steps, relying on feature-based difficulty signals to enable a more general and cost-efficient routing strategy under budget constraints.

\section{Method}

\begin{figure*}
    \centering
    \includegraphics[width=\linewidth]{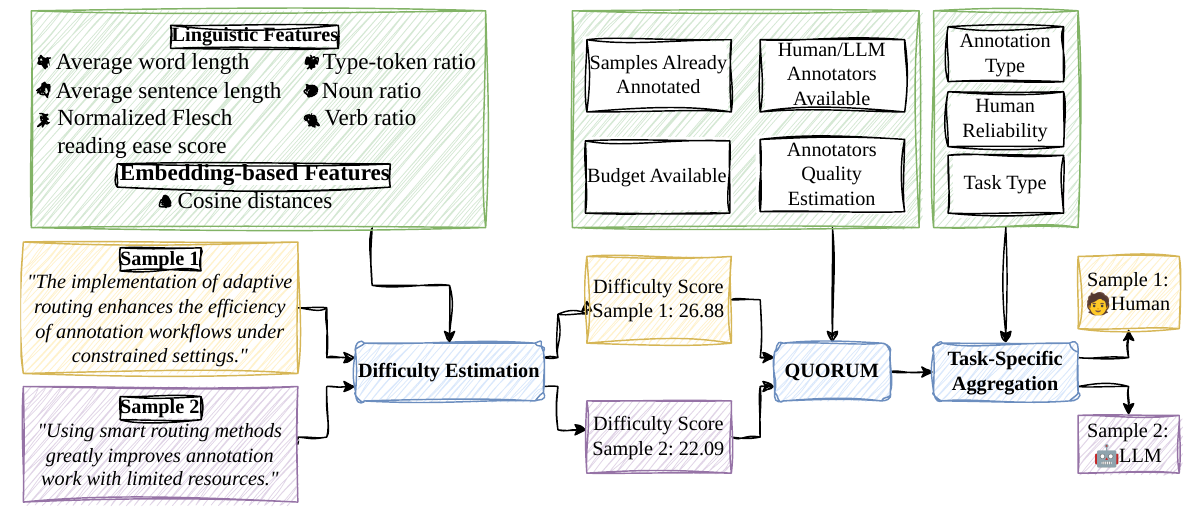}
    \caption{\quorum, a budget-aware annotation router estimating instance difficulty using linguistic and semantic features, assigning samples to human or LLM annotators to maximize annotation quality while minimizing cost.}
    \label{fig:graphical_abstract}
\end{figure*}

We consider a dataset $\mathcal{D}$ of $T$ samples:
\begin{equation*}
    \mathcal{D} = \{d_t\}_{t=1}^T \quad d_t \in \mathbb{R}^d
\end{equation*}
At each timestep $t$, the router selects annotators from an arm set, where an arm denotes an available action whose selection yields a reward and an associated cost~\citep{lattimore2020bandit}:
\begin{equation*}
   \mathcal{A} = \{1, \ldots, K, H\}
\end{equation*}
Where $K$ denotes the number of LLM annotators and $H$ denotes the human annotator.
Each arm $a \in \mathcal{A}$ is associated with a cost $c_a > 0$.
The objective is to maximize annotation quality under a budget constraint $B$. \Cref{fig:graphical_abstract} provides an overview of the proposed \quorum framework, including difficulty estimation, adaptive routing, budget-aware adaptation, and task-specific aggregation.

\subsection{Feature Definition}
\label{sec:features}

Our approach relies on feature-based signals to estimate the complexity of each input instance. These features serve as a proxy for instance difficulty and replace the use of model-derived uncertainty employed in prior work~\citep{zeng2026hypac}.

For each sample $d_t$, we construct a feature representation $\phi(d_t) \in \mathbb{R}^m$ based on two components.

\paragraph{Linguistic features}
We extract a set of scalar features designed to capture lexical and syntactic complexity~\citep{ziller2026greenserv, yang2019predicting}. These include: (i) average word length, (ii) type-token ratio, (iii) average sentence length, (iv) normalized Flesch reading ease score~\citep{flesch1948new}, (v) noun ratio, and (vi) verb ratio. Together, these features characterize properties such as vocabulary sophistication, structural complexity, and information density, which are known to correlate with annotation difficulty.

\paragraph{Embedding-based features}
To capture semantic properties, each instance is encoded into a dense vector representation using a pretrained text multilingual encoder\footnote{\url{huggingface.co/NovaSearch/stella_en_1.5B_v5}}. We then compute the cosine distances ($1 -$ cosine similarities) between $d_t$ and its $k$ nearest neighbors in the embedding space.
The resulting distance vector reflects the degree of semantic isolation of the instance, with higher distances indicating atypical or out-of-distribution samples potentially more difficult to annotate~\citep{hassan2023d}.

The final feature vector $\phi(d_t)$ is obtained by concatenating the linguistic and embedding-based features and applying normalization. This representation is used as input to the routing policy. Additional feature details are reported in~\cref{app:feature_description}.

\paragraph{Context Construction}
For each sample, the router first builds a contextual representation:
\begin{equation*}
x_t=[\phi(d_t), b_t, p_t, m_t]
\end{equation*}
Where $\phi(d_t)$ denotes normalized input features, $b_t$ is the remaining budget fraction, $p_t=t/T$ represents progress through the dataset, and $m_t$ is a binary mask indicating which LLMs have already been queried for that sample.
Consequently, each routing decision depends not only on the sample itself, but also on previously consumed resources and earlier annotations. This contextual vector becomes the input to all subsequent estimation stages.

\subsection{Human-LLM Routing with Calibration}
\label{routing_human_llm}

\paragraph{Feature-Space Uncertainty}
Before routing begins, the model allocates approximately 20\% of the affordable human budget to a calibration stage. This step mitigates uncertainty in early routing decisions by establishing initial estimates of LLM reliability. The objective is to estimate the reliability of each LLM before online decisions are made.
Calibration samples are selected according to feature-space uncertainty:
\begin{equation*}
u_t=\sqrt{(\phi(d_t)-\mu)^T\Sigma^{-1}(\phi(d_t)-\mu)}
\end{equation*}

where $\mu$ and $\Sigma$ are statistics over contextual features. Samples with highest uncertainty receive human labels. Human supervision is then used to estimate correctness of each LLM prediction:
\begin{equation*}
r_{a,t}=\mathbbm{1}(\hat y^{(a)}_t=y_t)
\end{equation*}

The resulting calibration rewards initialize annotator-specific posteriors.

\paragraph{Annotator Quality Estimation}

Each arm maintains Gaussian posterior parameters:
\begin{equation*}
\theta_a\sim\mathcal N(\mu_a,\Sigma_a)
\end{equation*}

Updated using Bayesian linear regression:

\begin{equation*}
\Sigma'_a= \left( \Sigma_a^{-1} + \frac{xx^T}{\sigma^2} \right)^{-1}
\end{equation*}

\begin{equation*}
\mu'_a= \Sigma'_a \left( \Sigma_a^{-1}\mu_a + \frac{r}{\sigma^2}x\right)
\end{equation*}

Posterior means encode expected annotator quality, whereas posterior covariance captures uncertainty. Human annotators are initialized with optimistic priors (higher mean and lower variance), reflecting greater expected reliability. This follows an assumption in annotation research that human labels generally provide stronger supervision signals than automated annotators~\citep{gligoric2025can}.

The learned posterior parameters subsequently determine routing choices.

\paragraph{Routing Mechanism}
Given context $x_t$, expected quality for each available LLM is estimated as $q_a= \mu_a^Tx_t$.
To avoid premature exploitation, routing includes decaying exploration:
\begin{equation*}
\epsilon_t=\frac{\epsilon_0}{1+\gamma t}
\end{equation*}
Where $\epsilon_0$ denotes the initial exploration probability and $\gamma$ controls the rate at which exploration decreases over time.
Therefore, early decisions emphasize exploration, whereas later stages increasingly exploit calibrated annotator estimates. At each timestep,
with probability $\epsilon_t$, the router explores by randomly selecting an available LLM:
\begin{equation*}
a_t \sim \text{Uniform}(\mathcal{A}_{\text{LLM}})
\end{equation*}
Otherwise, routing exploits current quality estimates and selects the
highest-scoring annotator:
\begin{equation*}
a_t=\arg\max_a q_a
\end{equation*}

When predicted quality falls below a threshold $\tau_H$, the router escalates to human annotation:
\begin{equation*}
a_t=
\begin{cases}
H,& q_{a^*}<\tau_H\\
a^*,& \text{otherwise}
\end{cases}
\end{equation*}

\subsection{Iterative Annotation Selection}

\paragraph{Budget-Aware Adaptive Refinement}
Every routing decision updates consumed cost. Human budgets are partitioned into a working component and a reserved component $B= B_{\text{working}} + B_{\text{reserved}}$.
The reserved part prevents exhaustion of expert annotations before difficult late-stage samples appear. 


After initial routing, samples are ranked for possible additional annotations. Difficulty is estimated from posterior uncertainty:

\begin{equation*}
D_t= \frac1K \sum_a x_t^T \Sigma_a x_t
\end{equation*}

We further define the annotation disagreement among collected labels: \begin{equation*} \Delta_t = 1 - \frac{\max_y  \text{count}(y)}{|\mathcal{Y}_t|} 
\end{equation*} where $\mathcal{Y}_t$ is the set of annotations for sample $t$.
For each sample, a priority score combines difficulty, disagreement and missing human supervision:
\begin{equation*}
P_t= D_t+ \Delta_t+ \mathbbm1(\text{no human})
\end{equation*}

Samples with highest priority receive extra annotations until budgets are exhausted. Hence annotation becomes an iterative refinement process rather than a single routing decision.

\paragraph{Task-specific Aggregation}
The final stage aggregates collected annotations differently depending on task type.
For classification, agreement is computed from label frequency with human weight $w_H=10$ and LLM weight $w_{\text{LLM}}=1$.

Consensus is achieved when weighted agreement exceeds a threshold. Final outputs are determined through weighted majority voting.
For summarization, exact agreement is inappropriate. Instead, summaries are represented through embeddings and semantic agreement is measured via average cosine similarity:

\begin{equation*}
r_t = 2 \left (\frac{ \sum_{i \neq j} \cos(e_i, e_j)}{|\mathcal{Y}_t| (|\mathcal{Y}_t| - 1)}\right )- 1
\end{equation*}

Where $e_i, e_j \in \mathbb{R}^d$ denote the embeddings. Consensus is reached when semantic similarity exceeds a threshold. The final summary is selected as the annotation closest to the embedding centroid of all summaries.
Therefore, in summarization we are rewarding, due to absence of labels, semantic consistency among annotators. The output of this aggregation stage constitutes the final prediction. The pseudocode of \quorum is in~\cref{pseudocode_sec}.

\subsection{Theoretical Properties}

\begin{proposition}[Posterior concentration]
\label{prop:posterior_concentration}

Assuming bounded contextual representations \(\|x_t\|_2 \le L\) and bounded true annotator parameters \(\|\theta_a^*\|_2 \le S\), with Gaussian noise:
\begin{equation*}
r_{a,t}=x_t^\top \theta_a^* + \eta_t, \qquad \eta_t \sim \mathcal N(0,\sigma^2)
\end{equation*}
For any annotator \(a\), with probability at least
\(1-\delta\):
\begin{equation*}
\left| x_t^\top ( \mu_{a,t} - \theta_a^*) \right| \le \beta_t \sqrt{ x_t^\top \Sigma_{a,t} x_t}
\end{equation*}
With:
\begin{equation*}
\beta_t= \sigma \sqrt{d\log \left( \frac{ 1+tL^2/\lambda}{
\delta}\right)}+\sqrt{\lambda}S .
\end{equation*}
We have that:
\begin{equation*}
x_t^\top \Sigma_{a,t}x_t\rightarrow0
\end{equation*}
As \(t\rightarrow\infty\), implying that uncertainty estimates become increasingly accurate over time.
\end{proposition}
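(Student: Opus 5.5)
The first part is a self-normalized confidence ellipsoid, in the style of Abbasi-Yadkori et al., applied to the Bayesian linear regression update of \cref{routing_human_llm}. The second part is an eigenvalue-growth argument for the design matrix.

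\textbf{Confidence bound.} Fix an annotator $a$. Take the prior $\mathcal N(0,\lambda^{-1}\sigma^2 I)$, so the recursive update in \cref{routing_human_llm} unrolls to closed form. Let $\tau\le t$ range over the rounds in which $a$ received a reward, and define
\begin{equation*}
V_{a,t}=\lambda I+\sum_{\tau} x_\tau x_\tau^\top ,\qquad \Sigma_{a,t}=\sigma^2 V_{a,t}^{-1},\qquad \mu_{a,t}=V_{a,t}^{-1}\sum_\tau r_{a,\tau}x_\tau .
\end{equation*}
A nonzero prior mean only shifts the bias term below. Substituting the model $r_{a,\tau}=x_\tau^\top\theta_a^*+\eta_\tau$ gives
\begin{equation*}
\mu_{a,t}-\theta_a^*=V_{a,t}^{-1}\xi_t-\lambda V_{a,t}^{-1}\theta_a^*,\qquad \xi_t=\sum_\tau \eta_\tau x_\tau .
\end{equation*}
Cauchy--Schwarz in the $V_{a,t}^{-1}$ inner product then yields
\begin{equation*}
|x_t^\top(\mu_{a,t}-\theta_a^*)|\le \|x_t\|_{V_{a,t}^{-1}}\bigl(\|\xi_t\|_{V_{a,t}^{-1}}+\sqrt\lambda S\bigr).
\end{equation*}
For the noise term I will invoke the self-normalized martingale bound for conditionally $\sigma$-sub-Gaussian noise. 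This applies because $x_\tau$ is $\mathcal F_{\tau-1}$-measurable: the routing, budget and mask coordinates are all determined before $r_{a,\tau}$ is revealed. With probability $1-\delta$, uniformly in $t$,
\begin{equation*}
\|\xi_t\|_{V_{a,t}^{-1}}\le \sigma\sqrt{2\log\bigl(\det(V_{a,t})^{1/2}\lambda^{-d/2}/\delta\bigr)} .
\end{equation*}
The determinant--trace inequality gives $\det V_{a,t}\le(\lambda+tL^2/d)^d$, which produces the stated $\beta_t$ up to constants. Converting $\|x_t\|_{V^{-1}}=\sigma^{-1}\sqrt{x_t^\top\Sigma_{a,t}x_t}$ absorbs the $\sigma$ factor. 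I expect the main bookkeeping effort to be matching the constants and the $\sigma$-scaling to the displayed $\beta_t$, where I may need to present the result up to absolute constants. A union bound over the $K+1$ arms, replacing $\delta$ by $\delta/(K+1)$, gives the statement for all $a$ simultaneously.

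\textbf{Variance vanishing.} We have $x_t^\top\Sigma_{a,t}x_t\le\sigma^2L^2/\lambda_{\min}(V_{a,t})$, so it suffices to show that $\lambda_{\min}(V_{a,t})\to\infty$. This is the main obstacle: it is not automatic, and requires that arm $a$ be pulled infinitely often with sufficiently diverse contexts.

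For the pulling frequency I will use the decaying exploration. Because $\sum_t\epsilon_t/|\mathcal A_{\rm LLM}|=\infty$ when $\epsilon_t=\epsilon_0/(1+\gamma t)$, the conditional Borel--Cantelli lemma shows that each LLM arm is explored infinitely often almost surely. The number of such pulls grows like $\log t$. For the human arm, the calibration stage and threshold escalation must play the same role; alternatively the claim can be restricted to arms pulled infinitely often.

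For diversity, I will add the standard assumption that the contexts have a covariance bounded below, $\mathbb E[x_tx_t^\top\mid\mathcal F_{t-1}]\succeq\kappa I$ on exploration rounds. A matrix Freedman/Azuma inequality then gives $\lambda_{\min}(V_{a,t})\gtrsim\kappa N_a(t)-O(\sqrt{N_a(t)\log t})\to\infty$, where $N_a(t)$ is the number of pulls of $a$.

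Combining the two parts, the width $\beta_t\sqrt{x_t^\top\Sigma_{a,t}x_t}=O\bigl(\sqrt{\log t/N_a(t)}\bigr)$ vanishes. I would therefore state the convergence claim explicitly under the diversity assumption.
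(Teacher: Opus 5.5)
Your route matches the paper's. The paper writes the Bayesian update in closed form and quotes Theorem~2 of Abbasi-Yadkori et al.\ (whose proof is exactly your self-normalized bound plus bias term plus determinant--trace inequality). It then applies Cauchy--Schwarz with $\Sigma_{a,t}^{\pm 1/2}$, and simply \emph{assumes} $\lambda_{\min}(\Sigma_{a,t}^{-1})\to\infty$ to conclude $x_t^\top\Sigma_{a,t}x_t\le L^2/\lambda_{\min}(\Sigma_{a,t}^{-1})\to 0$.

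One correction to your first part: the $\sigma$ discrepancy you defer as bookkeeping is not an absolute constant. With $\Sigma_{a,t}=\sigma^2V_{a,t}^{-1}$, your chain yields
\[
|x_t^\top(\mu_{a,t}-\theta_a^*)|\le \beta_t\,\|x_t\|_{V_{a,t}^{-1}}=\frac{\beta_t}{\sigma}\sqrt{x_t^\top\Sigma_{a,t}x_t},
\]
which implies the displayed bound only when $\sigma\ge 1$. For general $\sigma$, state it in the $V_{a,t}^{-1}$-norm. The paper's proof has the same hidden mismatch, because it uses Abbasi-Yadkori's $V_t$-norm ellipsoid directly as a $\Sigma_t^{-1}$-norm bound.

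Your second part has problems, in the piece the paper does not attempt.
\begin{itemize}
\item You made $x_t$ $\mathcal F_{t-1}$-measurable for the martingale bound. Then $\mathbb E[x_tx_t^\top\mid\mathcal F_{t-1}]=x_tx_t^\top$ has rank at most one and can never dominate $\kappa I$. You would need a filtration generated by the history before $d_t$ is drawn.
\item Even with that filtration, $b_t$, $p_t=t/T$ and $m_t$ are fixed by the history. So the conditional second moment of QUORUM's contexts is singular, and no per-round diversity bound can hold.
\item If diversity is credited only to exploration pulls, then $N_a(t)\asymp(\epsilon_0/(\gamma K))\log t$. In $\kappa N_a(t)-O(\sqrt{N_a(t)\log t})$ both terms are then of order $\log t$, so divergence does not follow. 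A union bound over a geometric grid $t=2^k$, using monotonicity of $\lambda_{\min}(V_{a,t})$, would cut the penalty to $\log\log t$.
\item For the same reason, your closing claim that the width $O(\sqrt{\log t/N_a(t)})$ vanishes fails in this regime: it is $O(1)$. You do not need it here. It is exactly the extra hypothesis $\lambda_{\min}(\Sigma_{a,t}^{-1})/\beta_{a,t}^2\to\infty$ that the paper imposes separately for Proposition~\ref{prop:regret}.
\end{itemize}
The sound version of your second part is the paper's: take growth of the accumulated design for arm $a$ as a hypothesis.
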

Annotator quality estimates concentrate around the true parameters as observations accumulate. Therefore, the uncertainty-driven difficulty score used by \quorum becomes progressively more reliable. A proof is provided in~\cref{app:proof_posterior}.

\begin{proposition}[Vanishing average regret]
\label{prop:regret}
Under the same assumptions of Proposition \ref{prop:posterior_concentration}, 
let:
\[
a_t^*
\in
\arg\max_{a\in\mathcal A}x_t^\top\theta_a^*
\]
be the context-dependent oracle annotator, and define:
\[
R(T)=
\sum_{t=1}^T
\left(
x_t^\top\theta_{a_t^*}^*
-
x_t^\top\theta_{\hat a_t}^*
\right)
\]
If the core routing rule uses decaying exploration
\(\epsilon_t=\epsilon_0/(1+\gamma t)\), the oracle annotator is uniformly
separated from the second-best annotator, and exploration provides sufficiently
diverse samples for each feasible annotator, then:
\[
\lim_{T\to\infty}\frac{R(T)}{T}=0
\]
Implying asymptotic convergence toward oracle routing.
\end{proposition}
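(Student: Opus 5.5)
We split the regret according to what happens at each step and control each piece using Proposition~\ref{prop:posterior_concentration}. Let $\Delta_{\max} = 2LS$, which bounds every per-step gap $x_t^\top(\theta_{a_t^*}^* - \theta_{\hat a_t}^*)$ by Cauchy--Schwarz. Let $\Delta_{\min}>0$ be the uniform separation between the oracle annotator and the second-best one. At each step $t$ there are three cases: (i) the router explores, which happens with probability $\epsilon_t$; (ii) the router exploits, and the event $\mathcal{E}_t$ of Proposition~\ref{prop:posterior_concentration} holds for all arms; (iii) the router exploits but $\mathcal{E}_t$ fails. This gives
\[
\mathbb{E}[R(T)] \le \Delta_{\max}\sum_{t=1}^T \epsilon_t + \Delta_{\max}\sum_{t=1}^T \Pr(\mathcal{E}_t^c) + \sum_{t=1}^T \mathbb{E}\!\left[\mathbbm{1}(\mathcal{E}_t)\, \mathrm{gap}_t\right].
\]

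The exploration term is easy. With $\epsilon_t=\epsilon_0/(1+\gamma t)$ we have $\sum_{t\le T}\epsilon_t = O(\log T/\gamma)$, so its contribution to $R(T)/T$ vanishes. For the failure term, we apply Proposition~\ref{prop:posterior_concentration} with a union bound over the $K+1$ arms, using confidence level $\delta_t = 1/t^2$; this only changes $\beta_t$ by a logarithmic factor. Then $\sum_t \Pr(\mathcal{E}_t^c) \le (K+1)\sum_t t^{-2} < \infty$. Since the statement concerns the realized $R(T)$, we would add Borel--Cantelli: $\mathcal{E}_t$ holds for all but finitely many $t$ almost surely, so this term contributes $O(1)$ pathwise.

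On $\mathcal{E}_t$, the exploit step picks $\hat a_t = \arg\max_a x_t^\top \mu_{a,t}$. It can differ from $a_t^*$ only if the estimation errors of the two competing arms together exceed the gap. By Proposition~\ref{prop:posterior_concentration}, this requires
\[
2\beta_t \max_a \sqrt{x_t^\top\Sigma_{a,t}x_t} \ge \Delta_{\min}.
\]
The threshold escalation to $H$ needs a matching remark: either $H$ is treated as an arm in the argmax, or $\tau_H$ is assumed consistent with the oracle comparison, so that escalation errors are covered by the same inequality. So the task reduces to showing that $\beta_t \max_a \sqrt{x_t^\top\Sigma_{a,t}x_t}\to 0$. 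Once this holds, misrouting under $\mathcal{E}_t$ stops after some finite time, and the Ces\`aro average of the third term vanishes.

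This last step is the main obstacle. Proposition~\ref{prop:posterior_concentration} gives $x_t^\top\Sigma_{a,t}x_t\to 0$, but $\beta_t$ grows like $\sqrt{\log t}$, so we need a rate. Here we invoke the assumption that ``exploration provides sufficiently diverse samples''. We would formalize it as $\lambda_{\min}(\Sigma_{a,t}^{-1}) \ge c\, N_a(t)/\sigma^2$ for some $c>0$, where $N_a(t)$ counts the observations of arm $a$. The number of exploration pulls of each arm is at least of order $\sum_{s\le t}\epsilon_s/K \asymp \log t$. Using $\|x_t\|\le L$, this gives
\[
x_t^\top\Sigma_{a,t}x_t \lesssim \frac{L^2\sigma^2}{c\log t},
\]
so $\beta_t^2\, x_t^\top\Sigma_{a,t}x_t = O(1)$, which is borderline and not sufficient. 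So we would first try to use the fact that exploited arms, especially $a_t^*$, are pulled linearly often, which gives a variance of order $1/t$. The remaining difficulty is the suboptimal arm. Its overestimate matters only if its upper estimate exceeds the oracle's, so we need only the oracle arm's width and the suboptimal arm's width to shrink jointly. If this fails, we fall back on reading the diversity assumption as guaranteeing $N_a(t)\gg \log t$ for every feasible arm, which makes the product tend to $0$ and closes the argument.
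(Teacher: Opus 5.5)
Your argument is essentially the paper's proof. The common steps are: per-step regret bounded by $2LS$; exploration rounds forming a vanishing fraction because $\sum_{t\le T}\epsilon_t=O(\log T)$; and concentration plus the margin $\Delta_{\min}$ forcing the greedy choice over all of $\mathcal A$ to equal $a_t^*$ after a finite $T_0$. Treating $H$ as an ordinary arm in the argmax is your first option for $\tau_H$, and it is what the paper's formal version does. The argument then closes with essentially your fallback hypothesis, which the paper simply assumes in the form $\lambda_{\min}(\Sigma_{a,t}^{-1})/\beta_{a,t}^2\to\infty$. Your borderline $\Theta(\log t)$ computation fairly explains why this cannot come from the $1/t$ schedule alone.

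The differences are bookkeeping. The paper works on the single high-probability event of Proposition~\ref{prop:posterior_concentration} rather than your $\delta_t=t^{-2}$ union bound with Borel--Cantelli; yours in fact gives an almost-sure conclusion instead of one holding with probability $1-\delta$. One step is missing, though: for the realized $R(T)$, the paper controls the exploration term pathwise via the martingale strong law $\frac{1}{T}\sum_{t\le T}(Z_t-\epsilon_t)\to 0$ a.s. Your expectation bound $\Delta_{\max}\sum_t\epsilon_t$ does not supply this, so you should add it.
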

Proposition~\ref{prop:regret} implies that the average loss
relative to an optimal routing policy vanishes over time.
Therefore, \quorum asymptotically approaches the performance of an oracle with knowledge of annotator quality.
Proof is provided in~\cref{app:proof_regret}.

\section{Experiments}
We evaluate \quorum across multiple tasks, budget regimes, and annotation settings. Additional implementation details, prompts, and hyperparameters are reported in~\cref{app:details}.

\paragraph{Baselines}
We compare against representative annotation routing baselines: ARAIDA~\citep{huang2024araida} (retrieval-augmented error-aware routing), CoAnnotating~\citep{li2023coannotating} and CDI~\citep{gligoric2025can} (confidence-based human--LLM collaboration), HyPAC~\citep{zeng2026hypac} (PAC-guaranteed budget-aware routing) and SANT~\citep{huang2024selective} (active learning with selective triage). We also insert the Random baseline, while we avoid inserting~\citet{wang2021want}, which was improved by~\citet{li2023coannotating} and~\citet{miranda2025hybrid} due to reproducibility problems. Since SANT and ARAIDA rely on label-based confidence estimation, they cannot be extended to summarization and are omitted in those settings.

\paragraph{Datasets}
We evaluate on several benchmarks spanning classification with AG's News~\cite{gulli2004ag}, SST2~\citep{socher-etal-2013-recursive}, IMDB~\citep{maas-EtAl:2011:ACL-HLT2011} and PubMed~\citep{dernoncourt2017pubmed}, multiple-choice QA with Global-MMLU~\citep{singh2025global} and MMLU-Redux~\cite{gema2025we} and summarization with CNN/DailyMail~\cite{see-etal-2017-get} and XLSum~\cite{hasan-etal-2021-xl}. Global-MMLU and XLSum are included to assess performance in multilingual settings.
\begin{table*}[t]
\centering
\caption{Mean $\pm$ std in the \textsc{auditor style} setting. For each dataset, we report the number of samples, Accuracy or ROUGE-1 depending on the task ($\uparrow$), Machine Cumulative Accuracy ($\uparrow$), and total annotation cost ($\downarrow$) for varying percentages of human-labeled data. \textbf{Bold} indicates the best-performing method, while \underline{underlined} the second best. Value in parentheses represents the average performance obtained by LLM annotators on that dataset.}\label{tab:auditor_style}
\resizebox{\linewidth}{!}{%
\begin{tabular}{llccc|ccc|ccc|ccc}
\toprule
\multirow{2}{*}{\textbf{Dataset}} & \multirow{2}{*}{\textbf{Method}} & \multicolumn{3}{c}{\textbf{10\%}} & \multicolumn{3}{c}{\textbf{30\%}} & \multicolumn{3}{c}{\textbf{50\%}} & \multicolumn{3}{c}{\textbf{70\%}} \\
\cmidrule(lr){3-5} \cmidrule(lr){6-8} \cmidrule(lr){9-11} \cmidrule(lr){12-14}
& & Accuracy/ROUGE & MCA & Cost & Accuracy/ROUGE & MCA & Cost & Accuracy/ROUGE & MCA & Cost & Accuracy/ROUGE & MCA & Cost \\
\midrule
\multirow{7}{*}{\shortstack{AG's News \\ (LLM: 0.834) \\ 7,600 samples}}
& Random & $0.729 \pm 0.001$ & $0.163 \pm 0.012$ & $144$ & $0.805 \pm 0.002$ & $0.146 \pm 0.005$ & $281$ & $0.794 \pm 0.002$ & $0.145 \pm 0.004$ & $418$ & $0.804 \pm 0.001$ & $0.154 \pm 0.002$ & $555$ \\
& ARAIDA & $0.787 \pm 0.010$ & $\mathbf{0.255 \pm 0.050}$ & $152$ & $0.834 \pm 0.013$ & $0.218 \pm 0.012$ & $304$ & $0.861 \pm 0.035$ & $0.217 \pm 0.014$ & $456$ & $0.886 \pm 0.055$ & $0.215 \pm 0.010$ & $608$ \\
& CDI & $0.800 \pm 0.001$ & $0.229 \pm 0.012$ & $152$ & $\underline{0.841 \pm 0.001}$ & $0.215 \pm 0.005$ & $304$ & $0.887 \pm 0.003$ & $0.222 \pm 0.005$ & $456$ & $0.921 \pm 0.002$ & $0.220 \pm 0.002$ & $608$ \\
& CoAnnotating & $\underline{0.800 \pm 0.001}$ & $\underline{0.233 \pm 0.001}$ & $152$ & $0.844 \pm 0.001$ & $0.223 \pm 0.001$ & $304$ & $0.889 \pm 0.001$ & $0.224 \pm 0.001$ & $456$ & $\underline{0.922 \pm 0.001}$ & $0.221 \pm 0.001$ & $608$ \\
& HyPAC & $0.794 \pm 0.007$ & $0.220 \pm 0.016$ & $152$ & $0.840 \pm 0.013$ & $\mathbf{0.226 \pm 0.007}$ & $304$ & $\underline{0.875 \pm 0.033}$ & $\underline{0.225 \pm 0.009}$ & $456$ & $0.903 \pm 0.052$ & $\underline{0.224 \pm 0.008}$ & $608$ \\
& SANT & $0.798 \pm 0.007$ & $0.236 \pm 0.026$ & $152$ & $0.838 \pm 0.009$ & $0.216 \pm 0.005$ & $304$ & $0.867 \pm 0.026$ & $0.216 \pm 0.006$ & $456$ & $0.896 \pm 0.041$ & $0.217 \pm 0.006$ & $608$ \\
& \approach & $\mathbf{0.800 \pm 0.001}$ & $0.224 \pm 0.010$ & $\mathbf{144}$ & $\mathbf{0.845 \pm 0.002}$ & $\underline{0.225 \pm 0.005}$ & $\mathbf{281}$ & $\mathbf{0.890 \pm 0.001}$ & $\mathbf{0.227 \pm 0.003}$ & $\mathbf{418}$ & $\mathbf{0.936 \pm 0.001}$ & $\mathbf{0.226 \pm 0.001}$ & $\mathbf{555}$ \\
\midrule
\multirow{7}{*}{\shortstack{SST2 \\ (LLM: 0.944) \\ 870 samples}}
& Random & $0.902 \pm 0.003$ & $0.067 \pm 0.027$ & $16$ & $0.921 \pm 0.003$ & $0.062 \pm 0.009$ & $32$ & $0.934 \pm 0.002$ & $0.062 \pm 0.004$ & $48$ & $0.943 \pm 0.003$ & $0.061 \pm 0.004$ & $64$ \\
& ARAIDA & $0.939 \pm 0.002$ & $0.047 \pm 0.020$ & $17$ & $0.946 \pm 0.003$ & $0.043 \pm 0.005$ & $35$ & $0.948 \pm 0.006$ & $0.044 \pm 0.007$ & $52$ & $0.949 \pm 0.009$ & $0.045 \pm 0.008$ & $70$ \\
& CDI & $0.941 \pm 0.003$ & $0.055 \pm 0.029$ & $17$ & $0.952 \pm 0.003$ & $0.054 \pm 0.010$ & $35$ & $\underline{0.970 \pm 0.004}$ & $0.069 \pm 0.009$ & $52$ & $\underline{0.985 \pm 0.002}$ & $0.070 \pm 0.004$ & $70$ \\
& CoAnnotating & $0.948 \pm 0.001$ & $0.126 \pm 0.001$ & $17$ & $\underline{0.959 \pm 0.001}$ & $\underline{0.077 \pm 0.001}$ & $35$ & $0.964 \pm 0.002$ & $0.057 \pm 0.005$ & $52$ & $0.978 \pm 0.002$ & $0.060 \pm 0.003$ & $70$ \\
& HyPAC & $\underline{0.966 \pm 0.007}$ & $\underline{0.173 \pm 0.035}$ & $17$ & $0.950 \pm 0.003$ & $0.048 \pm 0.011$ & $35$ & $0.969 \pm 0.003$ & $\mathbf{0.100 \pm 0.034}$ & $52$ & $0.969 \pm 0.003$ & $\underline{0.088 \pm 0.028}$ & $70$ \\
& SANT & $0.940 \pm 0.002$ & $0.045 \pm 0.018$ & $17$ & $0.951 \pm 0.003$ & $0.051 \pm 0.008$ & $35$ & $0.963 \pm 0.007$ & $0.056 \pm 0.007$ & $52$ & $0.975 \pm 0.011$ & $0.058 \pm 0.008$ & $70$ \\
& \approach & $\mathbf{0.968 \pm 0.001}$ & $\mathbf{0.188 \pm 0.006}$ & $\mathbf{16}$ & $\mathbf{0.969 \pm 0.003}$ & $\mathbf{0.124 \pm 0.032}$  & $\mathbf{32}$ & $\mathbf{0.971 \pm 0.001}$ & $\underline{0.071 \pm 0.001}$  & $\mathbf{48}$ & $\mathbf{0.990 \pm 0.001}$ & $\mathbf{0.097 \pm 0.001}$ & $\mathbf{64}$ \\
 \midrule
\multirow{7}{*}{\shortstack{MMLU-Redux \\ (LLM: 0.633) \\ 3,000 samples}}
& Random & $0.706 \pm 0.002$ & $0.204 \pm 0.021$ & $57$ & $0.691 \pm 0.003$ & $0.216 \pm 0.010$ & $111$ & $0.767 \pm 0.004$ & $0.196 \pm 0.007$ & $165$ & $0.901 \pm 0.004$ & $0.224 \pm 0.006$ & $219$ \\
& ARAIDA & $0.749 \pm 0.011$ & $0.278 \pm 0.018$ & $60$ & $0.794 \pm 0.013$ & $0.265 \pm 0.008$ & $120$ & $0.811 \pm 0.036$ & $0.266 \pm 0.006$ & $180$ & $0.826 \pm 0.059$ & $0.266 \pm 0.006$ & $240$ \\
& CoAnnotating & $\underline{0.772 \pm 0.001}$ & $\underline{0.430 \pm 0.001}$ & $60$ & $\mathbf{0.836 \pm 0.001}$ & $\mathbf{0.357 \pm 0.001}$ & $120$ & $\underline{0.889 \pm 0.001}$ & $\underline{0.321 \pm 0.001}$ & $180$ & $\underline{0.939 \pm 0.001}$ & $\underline{0.300 \pm 0.001}$ & $240$ \\
& HyPAC & $0.764 \pm 0.003$ & $0.355 \pm 0.026$ & $60$ & $\underline{0.832 \pm 0.003}$ & $\underline{0.344 \pm 0.009}$ & $120$ & $0.884 \pm 0.003$ & $0.311 \pm 0.006$ & $180$ & $0.936 \pm 0.003$ & $0.297 \pm 0.002$ & $240$ \\
& CDI & $0.759 \pm 0.003$ & $0.307 \pm 0.029$ & $60$ & $0.808 \pm 0.003$ & $0.270 \pm 0.009$ & $120$ & $0.865 \pm 0.004$ & $0.275 \pm 0.008$ & $180$ & $0.923 \pm 0.005$ & $0.279 \pm 0.006$ & $240$ \\
& SANT & $0.754 \pm 0.008$ & $0.278 \pm 0.018$ & $60$ & $0.807 \pm 0.009$ & $0.271 \pm 0.006$ & $120$ & $0.854 \pm 0.027$ & $0.275 \pm 0.007$ & $180$ & $0.899 \pm 0.046$ & $0.274 \pm 0.006$ & $240$ \\
& \approach & $\mathbf{0.778 \pm 0.002}$ & $\mathbf{0.497 \pm 0.023}$ & $\mathbf{57}$ & $0.815 \pm 0.002$ & $0.287 \pm 0.007$ & $\mathbf{111}$ & $\mathbf{0.891 \pm 0.002}$ & $\mathbf{0.373 \pm 0.003}$ & $\mathbf{165}$ & $\mathbf{0.954 \pm 0.002}$ & $\mathbf{0.364 \pm 0.003}$ & $\mathbf{219}$ \\
\midrule
\multirow{7}{*}{\shortstack{IMDB \\ (LLM: 0.954) \\ 25,000 samples}}
& Random & $0.901 \pm 0.001$ & $0.016 \pm 0.006$ & $475$ & $0.912 \pm 0.001$ & $0.015 \pm 0.002$ & $925$ & $0.923 \pm 0.001$ & $0.025 \pm 0.001$ & $1375$ & $0.944 \pm 0.001$ & $0.022 \pm 0.001$ & $1825$ \\
& ARAIDA & $0.948 \pm 0.002$ & $0.080 \pm 0.024$ & $500$ & $0.960 \pm 0.003$ & $\underline{0.057 \pm 0.001}$ & $1000$ & $0.965 \pm 0.008$ & $0.056 \pm 0.002$ & $1500$ & $0.970 \pm 0.013$ & $0.056 \pm 0.002$ & $2000$ \\
& CDI & $\underline{0.946 \pm 0.001}$ & $\underline{0.100 \pm 0.004}$ & $500$ & $0.961 \pm 0.001$ & $0.052 \pm 0.001$ & $1000$ & $0.969 \pm 0.001$ & $0.046 \pm 0.001$ & $1500$ & $0.985 \pm 0.001$ & $0.056 \pm 0.001$ & $2000$ \\
& CoAnnotating & $0.946 \pm 0.001$ & $0.006 \pm 0.001$ & $500$ & $0.960 \pm 0.001$ & $0.049 \pm 0.001$ & $1000$ & $\mathbf{0.981 \pm 0.001}$ & $\mathbf{0.071 \pm 0.001}$ & $1500$ & $\underline{0.990 \pm 0.001}$ & $\underline{0.064 \pm 0.001}$ & $2000$ \\
& HyPAC & $0.947 \pm 0.001$ & $0.058 \pm 0.012$ & $500$ & $0.950 \pm 0.001$ & $0.055 \pm 0.005$ & $1000$ & $0.954 \pm 0.001$ & $0.055 \pm 0.004$ & $1500$ & $0.957 \pm 0.001$ & $0.054 \pm 0.002$ & $2000$ \\
& SANT & $0.946 \pm 0.001$ & $0.100 \pm 0.001$ & $500$ & $0.957 \pm 0.001$ & $0.057 \pm 0.001$ & $1000$ & $0.957 \pm 0.001$ & $\underline{0.057 \pm 0.001}$ & $1500$ & $0.957 \pm 0.001$ & $0.057 \pm 0.001$ & $2000$ \\
& \approach & $\mathbf{0.951 \pm 0.001}$ & $\mathbf{0.056 \pm 0.002}$ & $\mathbf{475}$ & $\mathbf{0.962 \pm 0.001}$ & $\mathbf{0.058 \pm 0.001}$ & $\mathbf{925}$ & $\underline{0.973 \pm 0.001}$ & $0.056 \pm 0.001$ & $\mathbf{1375}$ & $\mathbf{0.993 \pm 0.001}$ & $\mathbf{0.064 \pm 0.001}$ & $\mathbf{1825}$ \\ \midrule
\multirow{7}{*}{\shortstack{PubMed \\ (LLM: 0.701) \\ 30,000 samples}}
& Random & $0.616 \pm 0.001$ & $0.317 \pm 0.008$ & $562$ & $0.708 \pm 0.001$ & $0.279 \pm 0.004$ & $1094$ & $0.731 \pm 0.002$ & $0.250 \pm 0.004$ & $1627$ & $0.752 \pm 0.001$ & $0.259 \pm 0.002$ & $2159$ \\
& ARAIDA & $0.666 \pm 0.019$ & $\underline{0.410 \pm 0.017}$ & $591$ & $0.728 \pm 0.018$ & $\underline{0.368 \pm 0.005}$ & $1183$ & $0.748 \pm 0.049$ & $0.364 \pm 0.002$ & $1775$ & $0.762 \pm 0.072$ & $0.362 \pm 0.006$ & $2366$ \\
& CDI & $0.677 \pm 0.001$ & $0.362 \pm 0.008$ & $591$ & $0.749 \pm 0.001$ & $0.362 \pm 0.003$ & $1183$ & $0.820 \pm 0.001$ & $0.360 \pm 0.002$ & $1775$ & $0.892 \pm 0.001$ & $0.360 \pm 0.002$ & $2366$ \\
& CoAnnotating & $\underline{0.678 \pm 0.001}$ & $0.375 \pm 0.001$ & $591$ & $\underline{0.750 \pm 0.001}$ & $0.364 \pm 0.001$ & $1183$ & $\underline{0.821 \pm 0.001}$ & $0.361 \pm 0.001$ & $1775$ & $\underline{0.890 \pm 0.001}$ & $0.356 \pm 0.001$ & $2366$ \\
& HyPAC & $0.678 \pm 0.001$ & $0.370 \pm 0.005$ & $591$ & $0.749 \pm 0.001$ & $0.361 \pm 0.003$ & $1183$ & $0.820 \pm 0.001$ & $0.359 \pm 0.002$ & $1775$ & $0.891 \pm 0.001$ & $0.358 \pm 0.002$ & $2366$ \\
& SANT & $0.673 \pm 0.011$ & $0.366 \pm 0.036$ & $591$ & $0.714 \pm 0.001$ & $0.365 \pm 0.001$ & $1183$ & $0.714 \pm 0.001$ & $\underline{0.364 \pm 0.001}$ & $1775$ & $0.714 \pm 0.001$ & $\underline{0.365 \pm 0.001}$ & $2366$ \\
& \approach & $\mathbf{0.683 \pm 0.001}$ & $\mathbf{0.424 \pm 0.011}$ & $\mathbf{562}$ & $\mathbf{0.762 \pm 0.001}$ & $\mathbf{0.404 \pm 0.003}$ & $\mathbf{1094}$ & $\mathbf{0.835 \pm 0.001}$ & $\mathbf{0.389 \pm 0.001}$ & $\mathbf{1627}$ & $\mathbf{0.905 \pm 0.001}$ & $\mathbf{0.377 \pm 0.001}$ & $\mathbf{2159}$ \\ \midrule
\multirow{5}{*}{\shortstack{CNN \\ (LLM: 0.363) \\ 11,490 Samples}}
& Random & $0.402 \pm 0.001$ & - & 10455  & $0.521 \pm 0.001$ & - & 31137 & $0.619 \pm 0.001$ & - & 51819 & $0.781 \pm 0.001$ & - & 72501 \\
& CDI & $0.423 \pm 0.001$ & - & 11604  & $\underline{0.556 \pm 0.001}$ & - & 34584 & $\mathbf{0.682 \pm 0.001}$ & - & 57564 & $0.807 \pm 0.004$ & - & 80544 \\
& CoAnnotating & $\underline{0.423 \pm 0.001}$ & - & 11604 & $0.550 \pm 0.001$ & - & 34584 & $0.677 \pm 0.001$ & - & 57564  & $\underline{0.810 \pm 0.001}$ & - & 80544 \\
& HyPAC & $0.423 \pm 0.001$ & - & 11604 & $0.550 \pm 0.001$ & - & 34584 & $0.677 \pm 0.001$ & - & 57564  & $0.806 \pm 0.001$ & - & 80544 \\
& \approach & $\mathbf{0.425 \pm 0.001}$ & - & \textbf{10455} & $\mathbf{0.560 \pm 0.001}$ & - & \textbf{31137} & $\underline{0.678 \pm 0.001}$ & - & \textbf{51819} & $\mathbf{0.816 \pm 0.001}$ & - & \textbf{72501} \\
\bottomrule
\end{tabular}}
\end{table*}

\paragraph{Experimental Settings}
We study two regimes:
\begin{itemize}
    \item \textsc{Auditor Style}: routing under a fixed budget of human annotations, deciding between human and LLM supervision.
    \item \textsc{Dollars}: routing under monetary constraints with heterogeneous annotator costs, allowing selection among multiple annotators.
\end{itemize}

We use \href{https://www.anthropic.com/news/claude-sonnet-4-5}{Claude Sonnet 4.5}, \href{https://aws.amazon.com/nova/}{Nova Pro v1.0}, and \href{https://huggingface.co/Qwen/Qwen3-32B}{Qwen3-32B} as annotators. We assume heterogeneous annotation, reflecting the pricing of Amazon Bedrock\footnote{\url{aws.amazon.com/bedrock/pricing}} models as of May 2026. We assign a cost of \$0.05 per annotation for Claude, \$0.03 for Nova Pro, \$0.01 for Qwen, and \$0.10\footnote{\url{https://aitaggers.com.au/pricing}} for human annotations. The low human cost reflects the simplicity of the task. This represents a conservative setting for \quorum: LLM costs are invariant to task difficulty, while human costs scale with complexity, so the savings from LLM-based annotation would only grow for harder tasks.
\quorum does not rely on uncertainty estimation, which requires additional LLM inference and therefore incurs extra annotation cost. For uncertainty-based methods, we reproduce prior settings using self-verbalized confidence~\citep{gligoric2025can}. Prompt templates and details are reported in Appendix~\ref{app:prompts}.

\begin{table*}[!t]
\centering
\caption{Mean $\pm$ std in the \textsc{dollars} setting. For each dataset, we report the full human annotation cost, Accuracy or ROUGE-1 depending on the task ($\uparrow$), and Machine Cumulative Accuracy ($\uparrow$) under varying budget constraints. \textbf{Bold} indicates the best-performing method, \underline{underlined} the second best. Row values indicate the available annotation budget, expressed as a percentage of the cost required to annotate the entire dataset using human annotators.}
\label{tab:dollars}
\resizebox{\linewidth}{!}{%
\begin{tabular}{llcc|cc|cc|cc}
\toprule
\multirow{2}{*}{\textbf{Dataset}} & \multirow{2}{*}{\textbf{Method}} & \multicolumn{2}{c}{\textbf{10\%}} & \multicolumn{2}{c}{\textbf{30\%}} & \multicolumn{2}{c}{\textbf{50\%}} & \multicolumn{2}{c}{\textbf{70\%}} \\
\cmidrule(lr){3-4} \cmidrule(lr){5-6} \cmidrule(lr){7-8} \cmidrule(lr){9-10}
& & Accuracy/ROUGE & MCA & Accuracy/ROUGE & MCA  & Accuracy/ROUGE & MCA & Accuracy/ROUGE & MCA \\ \midrule
\multirow{7}{*}{\shortstack{AG's News \\ (LLM: 0.834) \\ Human Cost: 760}}
& Random & $0.678 \pm 0.001$ & $0.059 \pm 0.070$ & $0.679 \pm 0.001$ & $0.064 \pm 0.050$ & $0.681 \pm 0.001$ & $0.050 \pm 0.028$ & $0.688 \pm 0.001$ & $0.064 \pm 0.013$ \\
& ARAIDA & $0.778 \pm 0.001$ & $\mathbf{0.120 \pm 0.001}$ & $0.783 \pm 0.012$ & $0.060 \pm 0.001$ & $0.786 \pm 0.018$ & $\mathbf{0.110 \pm 0.001}$ & $0.792 \pm 0.030$ & $\mathbf{0.095 \pm 0.001}$ \\
& CDI & $0.778 \pm 0.001$ & $\underline{0.120 \pm 0.001}$ & $0.779 \pm 0.001$ & $0.060 \pm 0.001$ & $0.780 \pm 0.001$ & $\underline{0.110 \pm 0.001}$ & $0.783 \pm 0.001$ & $\underline{0.095 \pm 0.001}$ \\
& CoAnnotating & $0.778 \pm 0.001$ & $0.069 \pm 0.001$ & $0.778 \pm 0.001$ & $0.069 \pm 0.001$ & $0.778 \pm 0.001$ & $0.069 \pm 0.001$ & $0.778 \pm 0.001$ & $0.069 \pm 0.001$ \\
& HyPAC & $\underline{0.783 \pm 0.003}$ & $0.100 \pm 0.017$ & $\underline{0.785 \pm 0.005}$ & $\underline{0.094 \pm 0.016}$ & $\underline{0.790 \pm 0.008}$ & $0.097 \pm 0.015$ & $\underline{0.797 \pm 0.013}$ & $0.094 \pm 0.010$ \\
& SANT & $0.781 \pm 0.009$ & $0.120 \pm 0.001$ & $0.783 \pm 0.012$ & $0.060 \pm 0.001$ & $0.786 \pm 0.018$ & $0.110 \pm 0.001$ & $0.792 \pm 0.030$ & $0.095 \pm 0.001$ \\
& \approach & $\mathbf{0.794 \pm 0.001}$ & $0.084 \pm 0.001$ & $\mathbf{0.809 \pm 0.005}$ & $\mathbf{0.099 \pm 0.001}$ & $\mathbf{0.829 \pm 0.010}$ & $0.094 \pm 0.001$ & $\mathbf{0.887 \pm 0.028}$ & $0.085 \pm 0.003$ \\
\midrule
\multirow{7}{*}{\shortstack{SST2 \\ (LLM: 0.944) \\ Human Cost: 87}}
& Random & $0.906 \pm 0.002$ & $0.015 \pm 0.091$ & $0.916 \pm 0.001$ & $0.020 \pm 0.089$ & $0.928 \pm 0.001$ & $0.038 \pm 0.061$ & $0.940 \pm 0.002$ & $0.040 \pm 0.027$ \\
& ARAIDA & $0.936 \pm 0.002$ & $0.000 \pm 0.002$ & $0.937 \pm 0.002$ & $0.000 \pm 0.002$ & $0.936 \pm 0.002$ & $0.000 \pm 0.002$ & $0.937 \pm 0.002$ & $0.045 \pm 0.002$ \\
& CDI & $0.936 \pm 0.002$ & $0.000 \pm 0.002$ & $0.936 \pm 0.002$ & $0.000 \pm 0.003$ & $0.936 \pm 0.003$ & $0.000 \pm 0.003$ & $0.937 \pm 0.003$ & $\underline{0.045 \pm 0.003}$ \\
& CoAnnotating & $0.937 \pm 0.003$ & $0.000 \pm 0.003$ & $0.937 \pm 0.003$ & $0.000 \pm 0.003$ & $0.937 \pm 0.003$ & $0.000 \pm 0.003$ & $0.937 \pm 0.003$ & $0.000 \pm 0.003$ \\
& HyPAC & $\underline{0.938 \pm 0.004}$ & $\underline{0.016 \pm 0.024}$ & $\underline{0.937 \pm 0.001}$ & $\underline{0.020 \pm 0.037}$ & $\underline{0.940 \pm 0.003}$ & $\underline{0.051 \pm 0.030}$ & $\underline{0.941 \pm 0.003}$ & $0.036 \pm 0.016$ \\
& SANT & $0.936 \pm 0.002$ & $0.000 \pm 0.003$ & $0.936 \pm 0.001$ & $0.000 \pm 0.001$ & $0.937 \pm 0.003$ & $0.000 \pm 0.001$ & $0.939 \pm 0.005$ & $0.045 \pm 0.001$ \\
& \approach & $\mathbf{0.944 \pm 0.001}$ & $\mathbf{0.021 \pm 0.001}$ & $\mathbf{0.950 \pm 0.001}$ & $\mathbf{0.022 \pm 0.001}$ & $\mathbf{0.953 \pm 0.001}$ & $\mathbf{0.054 \pm 0.001}$ & $\mathbf{0.967 \pm 0.001}$ & $\mathbf{0.048 \pm 0.001}$ \\
\midrule
\multirow{7}{*}{\shortstack{MMLU-Redux \\ (LLM: 0.633) \\ Human Cost: 300}}
& Random & $0.630 \pm 0.001$ & $0.370 \pm 0.142$ & $0.631 \pm 0.001$ & $0.379 \pm 0.072$ & $0.634 \pm 0.001$ & $0.405 \pm 0.051$ & $0.671 \pm 0.002$ & $0.383 \pm 0.032$ \\
& ARAIDA & $0.730 \pm 0.001$ & $0.111 \pm 0.001$ & $0.731 \pm 0.001$ & $0.421 \pm 0.002$ & $0.732 \pm 0.002$ & $0.410 \pm 0.002$ & $0.734 \pm 0.002$ & $0.329 \pm 0.002$ \\
& CDI & $0.730 \pm 0.002$ & $0.111 \pm 0.002$ & $0.731 \pm 0.002$ & $0.421 \pm 0.002$ & $0.732 \pm 0.002$ & $0.410 \pm 0.002$ & $0.734 \pm 0.002$ & $0.329 \pm 0.002$ \\
& CoAnnotating & $0.595 \pm 0.002$ & $0.336 \pm 0.002$ & $0.626 \pm 0.002$ & $\mathbf{0.482 \pm 0.002}$ & $0.686 \pm 0.002$ & $\mathbf{0.542 \pm 0.002}$ & $\mathbf{0.795 \pm 0.002}$ & $\mathbf{0.543 \pm 0.002}$ \\
& HyPAC & $\underline{0.735 \pm 0.001}$ & $\mathbf{0.401 \pm 0.048}$ & $\underline{0.737 \pm 0.001}$ & $\underline{0.428 \pm 0.043}$ & $\underline{0.741 \pm 0.001}$ & $0.427 \pm 0.044$ & $0.749 \pm 0.002$ & $0.417 \pm 0.037$ \\
& SANT & $0.730 \pm 0.002$ & $0.111 \pm 0.002$ & $0.731 \pm 0.002$ & $0.421 \pm 0.002$ & $0.732 \pm 0.001$ & $0.410 \pm 0.001$ & $0.734 \pm 0.001$ & $0.329 \pm 0.001$ \\
& \approach & $\mathbf{0.735 \pm 0.001}$ & $\underline{0.399 \pm 0.012}$ & $\mathbf{0.747 \pm 0.001}$ & $0.389 \pm 0.181$ & $\mathbf{0.759 \pm 0.001}$ & $\underline{0.492 \pm 0.058}$ & $\underline{0.780 \pm 0.001}$ & $\underline{0.439 \pm 0.015}$ \\
\midrule
\multirow{7}{*}{\shortstack{IMDB \\ (LLM: 0.954) \\ Human Cost: 2,500}}
& Random & $0.916 \pm 0.003$ & $0.053 \pm 0.008$ & $0.927 \pm 0.003$ & $0.056 \pm 0.011$ & $0.937 \pm 0.003$ & $0.053 \pm 0.008$ & $0.928 \pm 0.003$ & $0.037 \pm 0.005$ \\
& ARAIDA & $0.946 \pm 0.001$ & $0.110 \pm 0.001$ & $0.946 \pm 0.001$ & $0.097 \pm 0.001$ & $0.947 \pm 0.001$ & $0.097 \pm 0.001$ & $0.947 \pm 0.001$ & $0.071 \pm 0.001$ \\
& CDI & $0.946 \pm 0.001$ & $\underline{0.110 \pm 0.001}$ & $0.946 \pm 0.001$ & $\underline{0.097 \pm 0.001}$ & $0.947 \pm 0.001$ & $\underline{0.097 \pm 0.001}$ & $0.947 \pm 0.001$ & $\underline{0.071 \pm 0.001}$ \\
& CoAnnotating & $\underline{0.947 \pm 0.001}$ & $0.026 \pm 0.001$ & $\underline{0.947 \pm 0.001}$ & $0.014 \pm 0.001$ & $0.947 \pm 0.001$ & $0.007 \pm 0.001$ & $0.947 \pm 0.003$ & $0.007 \pm 0.003$ \\
& HyPAC & $0.947 \pm 0.003$ & $0.040 \pm 0.006$ & $0.947 \pm 0.003$ & $0.035 \pm 0.004$ & $\underline{0.948 \pm 0.001}$ & $0.038 \pm 0.008$ & $\underline{0.950 \pm 0.001}$ & $0.039 \pm 0.005$ \\
& SANT & $0.946 \pm 0.003$ & $0.110 \pm 0.003$ & $0.946 \pm 0.003$ & $0.097 \pm 0.003$ & $0.947 \pm 0.003$ & $0.097 \pm 0.003$ & $0.947 \pm 0.002$ & $0.071 \pm 0.002$ \\
& \approach & $\mathbf{0.949 \pm 0.002}$ & $\mathbf{0.118 \pm 0.040}$ & $\mathbf{0.952 \pm 0.001}$ & $\mathbf{0.124 \pm 0.003}$ & $\mathbf{0.960 \pm 0.003}$ & $\mathbf{0.160 \pm 0.002}$ & $\mathbf{0.968 \pm 0.006}$ & $\mathbf{0.153 \pm 0.001}$ \\
\midrule
\multirow{7}{*}{\shortstack{PubMed \\ (LLM: 0.701) \\ Human Cost: 3,000}}
& Random & $0.342 \pm 0.001$ & $0.174 \pm 0.036$ & $0.614 \pm 0.001$ & $0.097 \pm 0.025$ & $0.627 \pm 0.001$ & $0.113 \pm 0.015$ & $0.638 \pm 0.001$ & $0.172 \pm 0.010$ \\
& ARAIDA & $0.647 \pm 0.016$ & $0.206 \pm 0.002$ & $0.650 \pm 0.021$ & $0.175 \pm 0.002$ & $0.646 \pm 0.002$ & $\underline{0.192 \pm 0.002}$ & $0.651 \pm 0.001$ & $0.217 \pm 0.002$ \\
& CDI & $0.642 \pm 0.002$ & $\underline{0.206 \pm 0.002}$ & $0.643 \pm 0.002$ & $0.175 \pm 0.002$ & $0.646 \pm 0.002$ & $0.192 \pm 0.002$ & $0.651 \pm 0.001$ & $\underline{0.217 \pm 0.001}$ \\
& CoAnnotating & $\underline{0.655 \pm 0.003}$ & $0.182 \pm 0.001$ & $\underline{0.655 \pm 0.003}$ & $0.182 \pm 0.001$ & $0.655 \pm 0.002$ & $0.182 \pm 0.001$ & $0.656 \pm 0.001$ & $0.182 \pm 0.001$ \\
& HyPAC & $0.650 \pm 0.006$ & $0.192 \pm 0.015$ & $0.654 \pm 0.007$ & $\underline{0.197 \pm 0.013}$ & $\underline{0.660 \pm 0.011}$ & $0.188 \pm 0.015$ & $\underline{0.673 \pm 0.018}$ & $0.188 \pm 0.008$ \\
& SANT & $0.647 \pm 0.016$ & $0.206 \pm 0.001$ & $0.650 \pm 0.021$ & $0.175 \pm 0.001$ & $0.648 \pm 0.006$ & $\mathbf{0.192 \pm 0.001}$ & $0.664 \pm 0.039$ & $0.217 \pm 0.001$ \\
& \approach & $\mathbf{0.660 \pm 0.004}$ & $\mathbf{0.278 \pm 0.002}$ & $\mathbf{0.678 \pm 0.009}$ & $\mathbf{0.272 \pm 0.001}$ & $\mathbf{0.714 \pm 0.019}$ & $0.176 \pm 0.001$ & $\mathbf{0.787 \pm 0.039}$ & $\mathbf{0.242 \pm 0.002}$ \\
\midrule
\multirow{5}{*}{\shortstack{CNN \\ (LLM: 0.363) \\ Human Cost: 1,149}}
& Random & $0.342 \pm 0.001$ & - &  $0.295 \pm 0.001$ & - &  $0.341 \pm 0.001$ & - & $0.352 \pm 0.001$ & -  \\
& CDI & $0.361 \pm 0.001$ & - & $0.364 \pm 0.001$ & - &  $0.368 \pm 0.001$ & - & $0.378 \pm 0.001$  & - \\
& CoAnnotating  & $\underline{0.391 \pm 0.001}$ & - &  $\underline{0.422 \pm 0.001}$ & - & $\underline{0.486 \pm 0.001}$ & - &  $\underline{0.613 \pm 0.001}$ & - \\
& HyPAC  &  $0.373 \pm 0.001$ & - &  $0.378 \pm 0.001$ & - &  $0.388 \pm 0.001$ & - &  $0.407 \pm 0.001$  & - \\
& \approach & $\mathbf{0.476 \pm 0.001}$ & - &  $\mathbf{0.491 \pm 0.001}$ & - & $\mathbf{0.501 \pm 0.002}$ & - &  $\mathbf{0.638 \pm 0.002}$ & - \\
\bottomrule
\end{tabular}}
\end{table*}

\paragraph{Metrics} Following prior work~\citep{huang2024araida}, we report Accuracy for classification and ROUGE-1 for summarization. We evaluate Machine Cumulative Accuracy (MCA)~\citep{huang2024araida}, measuring how often instances routed to humans correspond to cases where LLM predictions would fail:
\begin{equation*}
\text{MCA} = \frac{\sum_{i \in H} \mathbbm1[\hat{y}_i \neq y_i]} {|H|}
\end{equation*}
Higher MCA indicates more effective allocation of humans. MCA is computed using Claude as the reference LLM, being the most expensive annotator. MCA is omitted for summarization tasks. We also report total annotation cost.

\paragraph{Implementation Details} We run all experiments using 10 random seeds for classification and 3 for summarization and report the mean and standard deviation across runs. Hyperparameters are selected based on sensitivity analysis (reported in~\cref{app:sensitivity}), using the configurations that achieved the best overall performance. We set the human escalation threshold to $\tau_H = 0.5$ and the parameters $\epsilon_0 = 0.3$ and $\gamma = 0.005$. 
\quorum is initialized with $\mu_a = \mathbf{0}$ and $\Sigma_a = I$ for LLM annotators, while human annotators are assigned a prior $\mu_H = 0.3 \cdot \mathbf{1}$ and $\Sigma_H = 0.5 I$. The maximum number of annotations per sample is set to 3, with a minimum of 1. The agreement threshold for consensus-based stopping is 0.66. 

While annotation complexity is not only determined by the characteristics of the input instance~\citep{cabitza2023toward}, but may also depend on annotator-specific factors, such as demographics and individual perspectives~\citep{sap2022annotators}, following previous research~\citep{gligoric2025can}, we use the human-provided labels from the original datasets as gold-standard annotations for evaluation purposes.

\section{Results}
We investigate the following research questions:

\noindent\textbf{RQ1.} Does \quorum improve annotation quality under limited annotation budgets?

\noindent\textbf{RQ2.} Can feature-based difficulty estimation replace uncertainty without sacrificing performance?

\noindent\textbf{RQ3.} Does allocating multiple annotations improve final prediction quality?

\noindent\textbf{RQ4.} Does \quorum transfer across languages?

\subsection*{RQ1. Budget-Aware Routing Evaluation}
We evaluate \quorum under two routing settings: \textsc{Auditor Style} and \textsc{Dollars}.

\paragraph{\textsc{Auditor Style}}
\Cref{tab:auditor_style} reports performance under varying human annotation budgets. Overall, \quorum achieves the strongest cost--performance trade-off, matching or improving performance while reducing annotation cost across datasets. The total cost for each sample is computed as the sum of per-call costs incurred for each annotator queried.
Improvements are most pronounced in low-budget regimes (10\%--30\%), where efficient allocation of human labels is critical. Gains are especially evident on challenging benchmarks such as PubMed and MMLU-Redux.
For summarization tasks, \quorum obtains the highest ROUGE-1 scores while maintaining the lowest annotation cost, showing generalization beyond classification settings. In a few cases at 10\% budget, performance falls below the average standalone LLM score (in parentheses). This is expected, as the LLM baseline reflects average annotator performance, whereas under strict budgets the system relies on the cheapest model.

\paragraph{\textsc{Dollars}}
\Cref{tab:dollars} evaluates routing under fixed monetary constraints with heterogeneous annotator costs. In this setting, \quorum benefits from increased budgets, while several baselines plateau early, indicating limited adaptability as additional resources become available.
This effect is evident on a challenging benchmark such as PubMed, where \quorum shows larger gains between low- and high-budget regimes. These results highlight more efficient budget allocation through selective rather than uniform use of expensive annotators.
For summarization, \quorum achieves the strongest ROUGE-1 scores, confirming that adaptive routing remains effective even when supervision quality is estimated via semantic similarity. An analysis of feature usage within the bandit is deferred to Appendix~\ref{app:feature_routing} due to space constraints.

\subsection*{RQ2. Features vs Uncertainty}
We compare our feature-based difficulty estimation with uncertainty-driven routing strategies. As shown in~\cref{tab:feature_importance}, our approach matches or outperforms uncertainty-based methods across all datasets and budget regimes. Crucially, uncertainty estimation requires additional LLM forward passes, introducing extra cost and computational overhead, whereas our features rely on lightweight, deterministic computations over the input text with no model inference. We compare against a random feature baseline, which performs worse, confirming that the improvement stems from the structured linguistic and semantic signals in our representation rather than feature dimensionality alone. Runtime comparisons in~\cref{app:time_analysis} further show that our approach is not only more economical in terms of annotation cost, but also substantially faster than confidence estimation or model retraining procedures, improving scalability on larger datasets.

\begin{table}[t]
    \centering
    \caption{Mean $\pm$ 95\% confidence intervals accuracy ($\uparrow$) with uncertainty-based routing, routing based on random features and our feature-based difficulty estimation. Budget is expressed as a percentage of dataset size.}
    \label{tab:feature_importance}
    \resizebox{\columnwidth}{!}{
    \begin{tabular}{c|cccc}
    \toprule
    \textbf{Budget} & \textbf{Dataset} & \textbf{LLM-Uncertainty Features} & \textbf{Random Features} & \textbf{Ours}  \\
    \midrule
    \multirow{3}{*}{10\%}
    & AG's News & $0.800 \pm 0.002$ & $0.697 \pm 0.009$ & $\mathbf{0.800 \pm 0.001}$ \\
    & SST2 & $0.948 \pm 0.001$ & $0.901 \pm 0.012$ & $\mathbf{0.968 \pm 0.001}$ \\
    & Global-MMLU (es) & $0.741 \pm 0.001$ & $0.518 \pm 0.012$ & $\mathbf{0.749 \pm 0.002}$ \\
    \midrule
    \multirow{3}{*}{70\%}
    & AG's News & $0.932 \pm 0.002$ & $0.712 \pm 0.007$ &  $\mathbf{0.936 \pm 0.001}$ \\
    & SST2 & $0.978 \pm 0.002$ & $0.921 \pm 0.010$ & $\mathbf{0.990 \pm 0.001}$ \\
    & Global-MMLU (es) & $0.910 \pm 0.001$ & $0.692 \pm 0.009$ & $\mathbf{0.912 \pm 0.001}$ \\
    \bottomrule
    \end{tabular}}
\end{table}

\subsection*{RQ3. Impact of Multiple Annotations}
\begin{table}[t]
    \centering
    \caption{Effect of multiple annotations per instance on \quorum's accuracy/ROUGE-1 across datasets.}
    \label{tab:multiple_annotations}
    \resizebox{\columnwidth}{!}{
    \begin{tabular}{c|cccc}
    \toprule
    \textbf{\#Annotations} & \textbf{AG's News}  & \textbf{PubMed} & \textbf{MMLU-Redux} & \textbf{CNN}\\
    \midrule
    1 & 0.852 & 0.740 & 0.721 & 0.387 \\
    2 & 0.873 & 0.756 & 0.754 & 0.398 \\
    3 & \textbf{0.891} & \textbf{0.791} & \textbf{0.766} & \textbf{0.410} \\
    \bottomrule
    \end{tabular}}
\end{table}

~\Cref{tab:multiple_annotations} evaluates the impact of increasing the number of annotations per instance on downstream performance in the \textsc{dollars} setting. Across all datasets, we observe an improvement as the number of annotations increases from one to three, confirming that additional supervision signals provide complementary information that improves label quality.
The gains are particularly pronounced on the CNN dataset, where performance increases from 0.387 to 0.410, suggesting that ambiguous instances benefit more from redundancy in annotations. These results support the choice of \quorum to dynamically allocate multiple annotations when disagreement is high, as additional labels translate into higher-quality final predictions.

To assess \quorum under a more realistic annotation setting, we conduct an additional experiment on the  \href{https://huggingface.co/datasets/rbnuria/SentiMP-En}{SentiMP-En} dataset~\citep{rodriguez2024federated}, where each textual sample is associated with three independent human annotations and one gold label. The gold labels are used only for evaluation, while the individual human annotations are used by the routing methods, naturally introducing annotator disagreement and noise. With a budget of 200 human annotations in the \textsc{Auditor Style} setting, \quorum consistently outperforms all competing routing strategies across all three annotators, while substantially improving over the standalone LLM baseline (60.4\% accuracy). These results suggest that \quorum remains effective under realistic, noisy, and heterogeneous human supervision.

\begin{table}[t]
    \centering
    \caption{Accuracy of different routing strategies on SentiMP-En under the \textsc{Auditor Style} setting with a budget of 200 human annotations. The noise rate is reported for each annotator. Best results for each annotator are shown in bold.}
    \label{tab:sentimp}
    \resizebox{\linewidth}{!}{
    \begin{tabular}{lcccc}
        \toprule
        \textbf{Labeller} & \textbf{QUORUM} & \textbf{CDI} & \textbf{CoAnnotating} & \textbf{Random} \\
        \midrule
        Labeller 1 (8\%)  & \textbf{0.7458} & 0.7292 & 0.7375 & 0.7313 \\
        Labeller 2 (8\%)  & \textbf{0.7500} & 0.7271 & 0.7250 & 0.7375 \\
        Labeller 3 (17\%) & \textbf{0.7271} & 0.6854 & 0.7125 & 0.6917 \\
        \bottomrule
    \end{tabular}}
\end{table}

\begin{figure}
    \centering
    \includegraphics[width=\columnwidth]{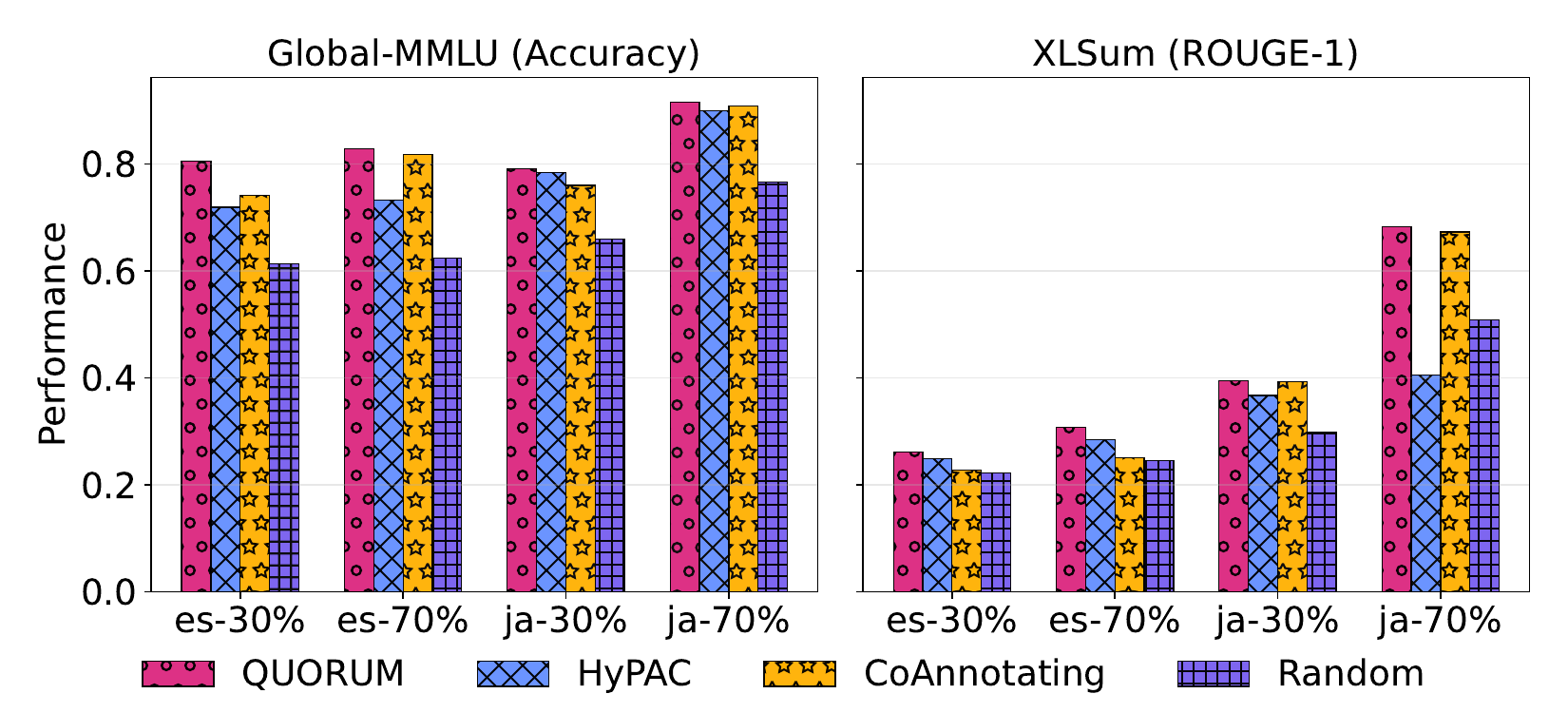}
    \caption{Performance under multilingual settings across Spanish (es) and Japanese (ja) benchmarks for classification (Global-MMLU) and summarization (XLSum) at different annotation budgets (30\% and 70\%).}
    \label{fig:multilingual}
\end{figure}

\subsection*{RQ4. Multilingual Generalization}
To assess whether routing decisions generalize beyond English, we evaluate \quorum on Spanish and Japanese for classification (Global-MMLU) and summarization (XLSum) in the \textsc{dollars} setting. As shown in~\cref{fig:multilingual}, \quorum matches or outperforms competing methods across languages and annotation budgets. Improvements are visible under higher-budget regimes, where additional annotations are allocated more effectively. These gains are achieved while maintaining lower annotation costs, suggesting that the feature-based routing transfers well across languages without requiring language-specific uncertainty estimation. Complete results can be found in~\cref{app:japanese_results}.

\section{Conclusion}

We introduced \quorum, a budget-aware annotation routing framework dynamically allocating samples between human annotators and LLMs. Unlike prior approaches relying on uncertainty, \quorum learns instance difficulty from linguistic and semantic features, enabling cost-efficient and adaptive annotation strategies.
Across classification and summarization benchmarks, our method consistently achieves strong performance–cost trade-offs.

\paragraph{Limitations}
While \quorum demonstrates strong performance across multiple benchmarks, we highlight a few directions for future work. First, our evaluation is conducted primarily on English-language datasets, and extending the framework to additional multilingual or cross-lingual settings would further validate its generality.
Second, although we consider a diverse set of tasks, evaluating \quorum on more specialized domains could provide additional insights into its behavior under different annotation regimes.

\section*{Acknowledgments}
Maria Sofia Bucarelli acknowledges the French government National Research Agency (ANR) through the UCA JEDI (ANR-15-IDEX-01),  through the EUR DS4H (ANR-17-EURE-004), and through the 3IA Cote d'Azur Investments in the project with the reference number ANR-23-IACL-0001. Fabrizio Silvestri and Antonio Purificato acknowledge the project SAP\_RICERCA\_2025\_EVOLVE\_SILVES\_F\_01.

\bibliography{biblio}

\appendix

\section{\quorum Description}
\subsection{Feature Description}
\label{app:feature_description}

Our approach relies on the following features:
\begin{itemize}
    \item Linguistic features~\citep{ziller2026greenserv}: Six numerical values that capture information like sentence complexity, ambiguity, or other text characteristics that make a sample harder to label: 
    \begin{itemize}
        \item Average Word Length: Mean number of characters per word. Longer words often indicate more technical or complex vocabulary. 
        \item Type-Token Ratio: Ratio of unique words to total words. Higher values indicate greater lexical diversity, suggesting more varied or sophisticated language use. 
        \item Average Sentence Length: Mean number of words per sentence. Longer sentences may indicate syntactic complexity or convoluted structure. For languages without explicit word boundaries (\textit{e.g.}, Japanese), word-length–based features were not employed due to the lack of a directly comparable segmentation scheme.
        \item Flesch Reading Ease: Inverted and normalized (0-1) Flesch score~\citep{flesch1948new}, where higher values indicate more difficult text. Combines syllable count and sentence length into a standard readability metric. 
        \item Noun Ratio: Proportion of nouns among POS-tagged content words. Higher noun density may indicate more entity-rich or descriptive text. 
        \item Verb Ratio: Proportion of verbs among POS-tagged content words. Higher verb density may indicate more action-oriented or procedural text.
    \end{itemize}
    \item Embedding features~\citep{hassan2023d}: Each sample is encoded using an embedder (Stella\footnote{\url{https://huggingface.co/NovaSearch/stella_en_1.5B_v5}} for our experiments). For each sample, we compute the cosine distance to its 5 nearest neighbors in the embedding space. This produces a distance vector that serves as a proxy for sample atypicality or semantic isolation .
\end{itemize}

\subsection{Proof of Proposition~\ref{prop:posterior_concentration}}
\label{app:proof_posterior}
For each annotator \(a\), rewards follow:
\begin{equation*}
r_{a,t}= x_t^\top \theta_a^* + \eta_t
\end{equation*}
With Gaussian noise $\eta_t\sim \mathcal N(0,\sigma^2)$.
Bayesian linear regression updates yield posterior parameters:
\begin{align*}
\Sigma_t^{-1} &=\Sigma_0^{-1}+ \frac1{\sigma^2} \sum_{s<t} x_sx_s^\top \\
\mu_t &= \Sigma_t \left(\Sigma_0^{-1}\mu_0 + \frac1{\sigma^2} \sum_{s<t} r_sx_s \right).
\end{align*}

Applying the concentration bounds for Bayesian linear
regression proved in ~\citet{abbasi2011improved} (precisely Theorem 2 in \citet{abbasi2011improved}), with probability
\(1-\delta\):
\begin{equation*}
\|\mu_t-\theta^*\|_{{\Sigma_t}^{-1}} \le \beta_t .
\end{equation*}
Now:
\begin{align*}
& \left|x_t^\top(\mu_{a,t}-\theta_a^*) \right|
= \left|x_t^\top \Sigma_{a,t}^{1/2} \Sigma_{a,t}^{- 1/2}\top(\mu_{a,t}-\theta_a^*)\right| \\ 
& = \left|
\left(\Sigma_{a,t}^{1/2}x_t\right)^\top
\left(\Sigma_{a,t}^{-1/2}(\mu_{a,t}-\theta_a^*)\right)
\right| 
\end{align*}
Using Cauchy-Schwarz we have: 
\begin{align*}
\left|x_t^\top(\mu_{a,t}-\theta_a^*) \right| \leq \left\|\Sigma_{a,t}^{1/2}x_t\right\|_2 
\left\|\mu_{a,t}-\theta_a^*\right\|_{\Sigma_{a,t}^{-1}}
\end{align*}
Namely:
\begin{equation*}
|x_t^\top (\mu_t-\theta^*)|\le \beta_t \sqrt{x_t^\top\Sigma_t x_t}
\end{equation*}

If the annotator is sampled infinitely often on sufficiently diverse contexts,
so that: \[ \lambda_{\min}(\Sigma_t^{-1})\to\infty\] Then for bounded contexts
\(\|x_t\|_2\le L\):
\[
x_t^\top\Sigma_t x_t
\le
\frac{L^2}{\lambda_{\min}(\Sigma_t^{-1})}
\to0
\]

Thus posterior uncertainty shrinks asymptotically and quality estimates converge toward the true annotator parameters. $\square$

\subsection{Proof of Proposition~\ref{prop:regret}}
\label{app:proof_regret}

\newtheorem*{propositionmanual}{Proposition~\ref{prop:regret} (formal)}

\begin{propositionmanual}
Assume the linear contextual reward model:
\[
r_{a,t}=x_t^\top\theta_a^*+\eta_{a,t}
\]
with bounded contexts \(\|x_t\|_2\le L\), bounded parameters
\(\|\theta_a^*\|_2\le S\), and conditionally sub-Gaussian noise. For each
context \(x_t\), define the oracle annotator:
\[
a_t^*
\in
\arg\max_{a\in\mathcal A} x_t^\top\theta_a^*
\]
Assume that the oracle annotator is uniformly separated from the second-best
annotator, i.e.:
\[
\Delta_{\min}
=
\inf_{t\ge1}
\left(
x_t^\top\theta_{a_t^*}^*
-
\max_{a\neq a_t^*}x_t^\top\theta_a^*
\right)
>0
\]
Consider the core routing rule that explores with probability:
\[
\epsilon_t=\frac{\epsilon_0}{1+\gamma t},
\]
and otherwise exploits according to:
\[
\hat a_t
\in
\arg\max_{a\in\mathcal A} x_t^\top\mu_{a,t}
\]
Assume that the exploration mechanism samples each feasible annotator infinitely
often on sufficiently diverse contexts, so that, for every annotator \(a\):
\[
\frac{\lambda_{\min}(\Sigma_{a,t}^{-1})}{\beta_{a,t}^2}
\to\infty
\]
Then, on the posterior concentration event of Proposition~\ref{prop:posterior_concentration},
the average regret vanishes:
\[
\lim_{T\to\infty}\frac{R(T)}{T}=0
\]
Where:
\[
R(T)
=
\sum_{t=1}^T
\left(
x_t^\top\theta_{a_t^*}^*
-
x_t^\top\theta_{\hat a_t}^*
\right)
\]
\end{propositionmanual}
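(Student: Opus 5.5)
The argument splits each round into exploration and exploitation and bounds the two contributions to $R(T)$ separately. Throughout we work on the posterior concentration event of Proposition~\ref{prop:posterior_concentration}; we take it to hold uniformly in $t$ and simultaneously for all annotators via a union bound over $\mathcal A$. The per-round regret is always bounded: by Cauchy--Schwarz, $x_t^\top\theta_{a_t^*}^* - x_t^\top\theta_{\hat a_t}^* \le 2LS$. If $E_t$ denotes the indicator that round $t$ is an exploration round, the exploration contribution is at most $2LS\sum_{t\le T}E_t$. Its expectation is
\[
2LS\sum_{t\le T}\frac{\epsilon_0}{1+\gamma t}=O(\log T).
\]
Since the $E_t$ are independent Bernoulli variables with summable variance ratio, a Kolmogorov strong law, or simply Markov's inequality for the averaged quantity, gives $\frac1T\sum_{t\le T}E_t\to0$. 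Exploration rounds therefore contribute $o(T)$.

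For exploitation rounds, Proposition~\ref{prop:posterior_concentration} gives, for every $a$,
\[
|x_t^\top(\mu_{a,t}-\theta_a^*)|\le \beta_{a,t}\sqrt{x_t^\top\Sigma_{a,t}x_t}\le \beta_{a,t}\,\frac{L}{\sqrt{\lambda_{\min}(\Sigma_{a,t}^{-1})}}=:w_{a,t}.
\]
The second inequality uses $x^\top\Sigma x\le \|x\|^2/\lambda_{\min}(\Sigma^{-1})$. The diversity hypothesis $\lambda_{\min}(\Sigma_{a,t}^{-1})/\beta_{a,t}^2\to\infty$ gives exactly $w_{a,t}\to0$ for each $a$. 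Because $\mathcal A$ is finite, $\max_a w_{a,t}\to0$, so there is a finite (random) time $t_0$ with $\max_a w_{a,t}<\Delta_{\min}/2$ for all $t\ge t_0$.

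For $t\ge t_0$ and any $a\ne a_t^*$,
\[
x_t^\top\mu_{a_t^*,t}\ge x_t^\top\theta^*_{a_t^*}-w>x_t^\top\theta^*_a+\Delta_{\min}-w>x_t^\top\theta_a^*+w\ge x_t^\top\mu_{a,t}.
\]
The exploitation step therefore selects $\hat a_t=a_t^*$ and incurs zero regret. Exploitation regret is thus at most $2LS\,t_0$, which is finite. Combining both parts,
\[
R(T)\le 2LS\,t_0+2LS\sum_{t\le T}E_t=o(T),
\]
and so $R(T)/T\to0$.

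The main obstacle is the interplay with the human-escalation threshold $\tau_H$ and the restriction of exploration to $\mathcal A_{\text{LLM}}$. The exploitation rule in the formal statement is the plain argmax over $\mathcal A$, so I would analyze this ``core routing rule'' as stated and note that the $\tau_H$ override coincides with it whenever the oracle's gap is respected. The other delicate point is that the concentration bound is per-time with probability $1-\delta$. I would use the anytime (uniform-in-$t$) version of the self-normalized bound of \citet{abbasi2011improved} together with a union bound over the finitely many arms, so that the single event makes the argument hold for all $t$ simultaneously with probability at least $1-|\mathcal A|\delta$.
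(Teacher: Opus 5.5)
Your proof is correct and follows the paper's argument essentially step for step. It uses the same bound $x_t^\top\Sigma_{a,t}x_t\le L^2/\lambda_{\min}(\Sigma_{a,t}^{-1})$ to drive each prediction error to zero, the same margin argument showing the greedy rule picks $a_t^*$ after a finite time (you use $\Delta_{\min}/2$ where the paper uses $\Delta_{\min}/4$), the same $2LS$ per-round bound, and a strong law for the exploration indicators; the only changes are Kolmogorov's law for independent Bernoullis (valid here since $\epsilon_t$ is deterministic) in place of the paper's martingale strong law, and a more explicit treatment of the anytime, union-bounded concentration event. The one slip is the aside that Markov's inequality would also suffice: it only gives $\frac1T\sum_{t\le T}E_t\to0$ in probability, not the pathwise limit the statement asserts, so keep the strong-law route.
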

\begin{proof}
Let:
\[
\Delta_t
=
x_t^\top\theta_{a_t^*}^*
-
x_t^\top\theta_{\hat a_t}^*
\]
denote the instantaneous regret.

We work on the high-probability event from
Proposition~\ref{prop:posterior_concentration}, on which, for every annotator
\(a\):
\[
\left|x_t^\top(\mu_{a,t}-\theta_a^*)\right|
\le
\beta_{a,t}\sqrt{x_t^\top\Sigma_{a,t}x_t}
\]

First, we show that the prediction error vanishes. Since \(\|x_t\|_2\le L\):
\[
x_t^\top\Sigma_{a,t}x_t
\le
\lambda_{\max}(\Sigma_{a,t})\|x_t\|_2^2
\]
Moreover:
\[
\lambda_{\max}(\Sigma_{a,t})
=
\frac{1}{\lambda_{\min}(\Sigma_{a,t}^{-1})}
\]
Therefore:
\[
x_t^\top\Sigma_{a,t}x_t
\le
\frac{L^2}{\lambda_{\min}(\Sigma_{a,t}^{-1})}
\]
Substituting this into the concentration bound gives:
\[
\left|x_t^\top(\mu_{a,t}-\theta_a^*)\right|
\le
\frac{\beta_{a,t}L}
{\sqrt{\lambda_{\min}(\Sigma_{a,t}^{-1})}}
\]
By assumption:
\[
\frac{\lambda_{\min}(\Sigma_{a,t}^{-1})}{\beta_{a,t}^2}
\to\infty
\]
And hence:
\[
\frac{\beta_{a,t}L}
{\sqrt{\lambda_{\min}(\Sigma_{a,t}^{-1})}}
\to0
\]
Thus, for every annotator \(a\):
\[
\left|x_t^\top(\mu_{a,t}-\theta_a^*)\right|
\to0
\]
Since \(\mathcal A\) is finite, this convergence holds uniformly over
annotators:
\[
\max_{a\in\mathcal A}
\left|x_t^\top(\mu_{a,t}-\theta_a^*)\right|
\to0
\]

Now use the margin assumption. Since the maximum prediction error over
annotators goes to zero, there exists a finite time \(T_0\) such that, for all
\(t\ge T_0\) and all \(a\in\mathcal A\):
\[
\left|x_t^\top(\mu_{a,t}-\theta_a^*)\right|
\le
\frac{\Delta_{\min}}{4}.
\]
Fix any \(t\ge T_0\), and suppose the algorithm is in an exploitation round.
We show that the greedy rule selects the oracle annotator.

For the oracle annotator \(a_t^*\), we have:
\[
x_t^\top\mu_{a_t^*,t}
\ge
x_t^\top\theta_{a_t^*}^*
-
\frac{\Delta_{\min}}{4}
\]
For any non-oracle annotator \(a\neq a_t^*\), the margin assumption gives:
\[
x_t^\top\theta_a^*
\le
x_t^\top\theta_{a_t^*}^*
-
\Delta_{\min}
\]
Using again the prediction-error bound:
\[
x_t^\top\mu_{a,t}
\le
x_t^\top\theta_a^*
+
\frac{\Delta_{\min}}{4}
\]
Combining the last two inequalities:
\[
x_t^\top\mu_{a,t}
\le
x_t^\top\theta_{a_t^*}^*
-
\Delta_{\min}
+
\frac{\Delta_{\min}}{4}
=
x_t^\top\theta_{a_t^*}^*
-
\frac{3\Delta_{\min}}{4}
\]
On the other hand:
\[
x_t^\top\mu_{a_t^*,t}
\ge
x_t^\top\theta_{a_t^*}^*
-
\frac{\Delta_{\min}}{4}
\]
Therefore, for every \(a\neq a_t^*\):
\[
x_t^\top\mu_{a_t^*,t}
>
x_t^\top\mu_{a,t}
\]
Hence the exploitation rule:
\[
\hat a_t\in\arg\max_{a\in\mathcal A}x_t^\top\mu_{a,t}
\]
selects \(\hat a_t=a_t^*\). Thus every exploitation round after \(T_0\) has
zero regret.

For the exploration rounds, let \(Z_t\in\{0,1\}\) be the indicator that round
\(t\) is an exploration round. By construction of the algorithm:
\[
\mathbb P(Z_t=1\mid\mathcal F_{t-1})=\epsilon_t
\]
where \(\mathcal F_{t-1}\) denotes the history before round \(t\). Since
\(Z_t\) is binary, this implies:
\[
\mathbb E[Z_t\mid\mathcal F_{t-1}]=\epsilon_t
\]
Defining:
\[
Y_t=Z_t-\epsilon_t
\]
Then:
\[
\mathbb E[Y_t\mid\mathcal F_{t-1}]
=
\mathbb E[Z_t-\epsilon_t\mid\mathcal F_{t-1}]
=
\epsilon_t-\epsilon_t
=
0
\]
Defining $M_T$ as:
\[
M_T=\sum_{t=1}^T Y_t
=
\sum_{t=1}^T(Z_t-\epsilon_t)
\]
We can show it is a martingale with bounded increments.
Since:
\[
\epsilon_t=\frac{\epsilon_0}{1+\gamma t}
\]
We have:
\[
\sum_{t=1}^T\epsilon_t
\le
\frac{\epsilon_0}{\gamma}\log(1+\gamma T)
\]
Therefore:
\[
\frac{1}{T}\sum_{t=1}^T\epsilon_t\to0
\]
Since \(Y_t=Z_t-\epsilon_t\) is a martingale difference sequence and
\(|Y_t|\le1\), we have:
\[
\sum_{t=1}^{\infty}\frac{\mathbb E[Y_t^2]}{t^2}
\le
\sum_{t=1}^{\infty}\frac1{t^2}<\infty
\]
By the strong law for martingale differences
\citep[Theorem~2.18]{hall2014martingale}, it follows that
\[
\frac1T\sum_{t=1}^T(Z_t-\epsilon_t)\to0
\quad\text{almost surely.}
\]
Consequently:
\[
\frac1T\sum_{t=1}^T Z_t
=
\frac1T\sum_{t=1}^T\epsilon_t
+
\frac1T\sum_{t=1}^T(Z_t-\epsilon_t)
\to0
\]
Thus exploration rounds form a vanishing fraction of all rounds.

Finally, instantaneous regret is bounded. Indeed, since \(\|x_t\|_2\le L\) and
\(\|\theta_a^*\|_2\le S\):
\[
\left|x_t^\top\theta_a^*-x_t^\top\theta_b^*\right|
\le
|x_t^\top\theta_a^*|+|x_t^\top\theta_b^*|
\le
2LS
\]
for any \(a,b\). Hence \(\Delta_t\le2LS\).

After time \(T_0\), exploitation rounds have zero regret, and only exploration
rounds can contribute regret. Therefore:
\[
R(T)
\le
2LS\,T_0
+
2LS\sum_{t=T_0}^T Z_t
\]
Dividing by \(T\), we obtain:
\[
\frac{R(T)}{T}
\le
\frac{2LS\,T_0}{T}
+
2LS\frac1T\sum_{t=T_0}^T Z_t
\]
The first term goes to zero because \(T_0\) is finite. The second term goes to
zero because exploration rounds have vanishing frequency. Therefore:
\[
\frac{R(T)}{T}\to0
\]
\end{proof}

\begin{algorithm}[t]
\caption{QUORUM --- \textsc{dollars}}
\label{pseudocode:dollars}
\begin{algorithmic}[1]
\Require LLM predictions $\mathcal{P} = \{P_1, \ldots, P_K\}$, human labels $\mathbf{y}$, budget $B$
\Ensure Final outputs $\mathbf{o} = [o_1, \ldots, o_T]$
\State Initialize posteriors: $\boldsymbol{\mu}_a \gets \mathbf{0}$, $\boldsymbol{\Sigma}_a \gets \mathbf{I}$ for LLM arms
\Statex \textbf{Phase 1: Calibration}
\State $N_{\text{cal}} \gets \lfloor 0.20 \cdot \lfloor B / c_{\text{human}} \rfloor \rfloor$
\State $\mathcal{C} \gets$ top-$N_{\text{cal}}$ samples by Mahalanobis uncertainty
\For{$t \in \mathcal{C}$}
    \State Query human for $y_t$; for each LLM arm $a$: update posterior with reward $r$
\EndFor
\Statex \textbf{Phase 2: Routing}
\For{each uncovered sample $t \notin \mathcal{C}$}
    \State $a^* \gets \Call{Select}{\mathbf{x}_t, \text{available arms}, t}$
    \State Record annotation from $a^*$; update budget
\EndFor
\Statex \textbf{Phase 3: Extra Annotations} 
\While{budget remains}
    \State Pick highest-priority sample by $P_t$
    \State Select arm; record annotation
\EndWhile
\Statex \textbf{Phase 4: Aggregation}
\For{$t = 1, \ldots, T$}
    \State $o_t \gets$ weighted majority vote
\EndFor
\end{algorithmic}
\end{algorithm}

\begin{algorithm}[t]
\caption{QUORUM --- \textsc{Auditor Style}}
\label{pseudocode:auditor}
\begin{algorithmic}[1]
\Require LLM predictions $\mathcal{P}$, human labels $\mathbf{y}$, human budget $B$
\Ensure Final outputs $\mathbf{o}$
\State $\mathbf{p} \gets$ predictions from cheapest LLM
\State $\mathcal{C} \gets$ random $0.2B$ calibration samples
\State $\mathcal{E} \gets \{t \in \mathcal{C} : p_t \neq y_t\}$
\If{$|\mathcal{E}| \geq 2$}
    \State Sort remaining samples by distance to error centroid (ascending)
\EndIf
\State Assign human labels to $\mathcal{C} \cup$ top-$(B - |\mathcal{C}|)$ nearest samples; use LLM elsewhere
\end{algorithmic}
\end{algorithm}

\subsection{\approach Pseudocode}
\label{pseudocode_sec}

\Cref{pseudocode:dollars} summarizes \quorum under the \textsc{Dollars} setting, where routing decisions are constrained by a monetary budget and annotators have heterogeneous costs. The procedure consists of an initial calibration phase, adaptive routing, optional budget-aware re-annotation of difficult samples, and task-specific aggregation. \verb|SELECT| refers to the Routing Mechanism paragraph of~\cref{routing_human_llm}.
\Cref{pseudocode:auditor} summarizes the \textsc{Auditor Style} setting, where a fixed human annotation budget is available and the router decides whether to allocate samples to humans or to the cheapest LLM annotator. Human supervision is concentrated on calibration examples and samples estimated to be close to previously observed model errors.



\section{Experiments - Additional Details}
\label{app:details}

\paragraph{Experimental Setup}
All experiments are conducted on a machine equipped with 96 CPU cores (AMD EPYC 7R32), 4 NVIDIA A10G GPUs, and 373 GB of system memory.

\paragraph{Prompts}
\label{app:prompts}
The prompt we use to obtain LLM annotations for classification is the following:

\begin{tcolorbox}[colback=gray!10, colframe=gray!50]
\small{
{\ttfamily This is a text classification task. The possible labels are [...] while the indices of the labels are [...].
Output only 'Label: the predicted index of the label that you predict'. This is the sentence to classify: [...]. 
}
}
\end{tcolorbox}

The prompt we use to obtain LLM annotations for summarization is the following:

\begin{tcolorbox}[colback=gray!10, colframe=gray!50]
\small{
{\ttfamily This is a text summarization task. Output only the summarized text and nothing more. Be concise and summarize the following text [...]
}
}
\end{tcolorbox}

The prompt we use to obtain LLM annotations for Q\&A is the following:

\begin{tcolorbox}[colback=gray!10, colframe=gray!50]
\small{
{\ttfamily This is a question-answering task. Select the right answer from the available choices. Output only 'Label: the predicted index of the label that you predict'. This is the question [...] and the possible answers are: [...]
}
}
\end{tcolorbox}

The prompt we use to obtain the confidence is:

\begin{tcolorbox}[colback=gray!10, colframe=gray!50]
\small{
{\ttfamily How likely is it that the following text is correctly summarized/classified? Output the probability (a number between 0 and 1) Text: [...] Prediction: [...]
}
}
\end{tcolorbox}

We set \verb|max_tokens=1000| and \verb|temperature=1|.

\paragraph{Datasets} We evaluate on a diverse set of benchmarks spanning several tasks like classification with AG's News, SST2~\citep{socher-etal-2013-recursive}, IMDB~\citep{maas-EtAl:2011:ACL-HLT2011} and PubMed~\citep{dernoncourt2017pubmed}, multiple-choice QA with Global-MMLU~\citep{singh2025global} and MMLU-Redux~\cite{gema2025we} and abstractive summarization with CNN/DailyMail~\cite{see-etal-2017-get} and XLSum~\cite{hasan-etal-2021-xl}. This selection covers varying difficulty levels, class distributions, and task formats.

\begin{table}[t]
  \centering
  \caption{Number of classes, class distribution and number of samples of the selected datasets. NA indicates summarization datasets with no classes.}
    \label{tab:datasets} 
  \resizebox{\linewidth}{!}{
  \begin{tabular}{lccc}
        \toprule
        \textbf{Name} & \#Classes & Class Distribution (\%)  & \#Samples  \\
        \midrule
        \href{https://huggingface.co/datasets/fancyzhx/ag_news}{AG's News}                       & 4         & [25,25,25,25]   & 7,600    \\
      \href{https://huggingface.co/datasets/stanfordnlp/sst2}{SST2} & 2 & [49.9,51.1] & 870 \\
      \href{https://huggingface.co/datasets/CohereLabs/Global-MMLU}{Global-MMLU}$^*$ & 4 & [23, 25, 25, 27] &  14,300  \\
         \href{https://huggingface.co/datasets/edinburgh-dawg/mmlu-redux}{MMLU-Redux}$^*$ & 4 & [22, 25, 25, 28] &  3,000  \\
         \href{https://huggingface.co/datasets/stanfordnlp/imdb}{IMDB}$^*$   & 2         & [50,50] & 25,000     \\
         \href{https://huggingface.co/datasets/pietrolesci/pubmed-20k-rct}{PubMed}$^*$ & 5 & [10.4,15.45, 33.42,7.89,32.84]   & 30,000 \\
        \href{https://huggingface.co/datasets/rbnuria/SentiMP-En}{SentiMP}$^*$ & 3 & [28,21,51]  & 480  \\\href{https://huggingface.co/datasets/abisee/cnn_dailymail}{CNN}$^*$ & NA & NA & 11,490 \\
         \href{https://huggingface.co/datasets/csebuetnlp/xlsum}{XLSum}$^*$ & NA & NA  & 4,763  \\

        \bottomrule
    \end{tabular}
    }
\end{table}

\section{Additional Results}

\subsection{Feature Analysis of \quorum}
\label{app:feature_routing}
\begin{figure*}[ht]
    \centering
    \includegraphics[width=\linewidth]{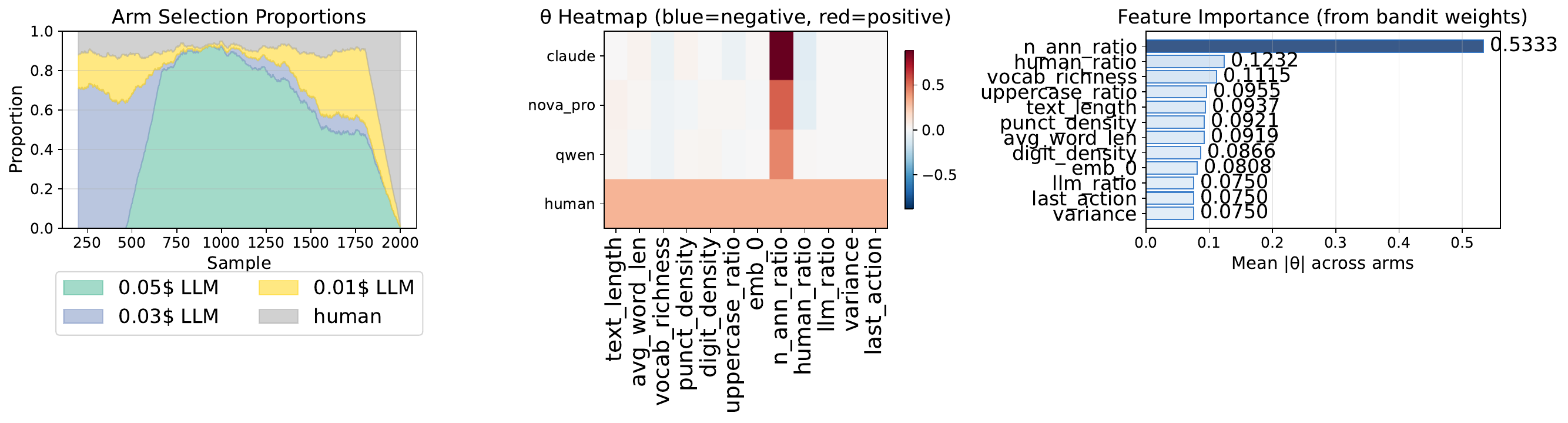}
    \caption{Analysis of the learned bandit routing policy on the AG's News dataset (window of 2000 samples) in the \textsc{dollars} setting. \textbf{Left:} arm selection proportions over time, showing convergence from exploration to exploitation. \textbf{Center:} heatmap of posterior weight vectors across features and arms (blue=negative, red=positive). \textbf{Right:} feature importance computed as mean absolute weight across arms.} \label{fig:bandit_dashboard}
\end{figure*}

\begin{table*}[ht]
\centering
\caption{Sensitivity analysis of \quorum with respect to key routing and aggregation hyperparameters. We report downstream accuracy while varying the human escalation threshold ($\tau_H$), human voting weight ($w_H$), initial exploration rate ($\epsilon_0$), exploration decay factor ($\gamma$) and calibration size.}
\label{tab:sensitivity}
\begin{minipage}[t]{0.28\textwidth}
\centering
\begin{tabular}{cc}
\toprule
$\mathbf{\tau_{H}}$ Value & Accuracy \\ 
\midrule
0.1 & 0.780 \\
0.3 & 0.781 \\
0.5 & \textbf{0.879} \\
0.7 & 0.875 \\
\bottomrule
\end{tabular}
\subcaption{$\tau_H$ sensitivity.}
\end{minipage}
\hfill
\begin{minipage}[t]{0.28\textwidth}
\centering
\begin{tabular}{cc}
\toprule
$\mathbf{w_{H}}$ Value & Accuracy \\ 
\midrule
1 & 0.790 \\
5 & 0.831 \\
10 & \textbf{0.879} \\
15 & 0.871 \\
\bottomrule
\end{tabular}
\subcaption{$w_H$ sensitivity.}
\end{minipage}
\hfill
\begin{minipage}[t]{0.28\textwidth}
\centering
\begin{tabular}{cc}
\toprule
$\mathbf{\epsilon_0}$ Value & Accuracy \\ 
\midrule
0.1 & 0.861 \\
0.3 & \textbf{0.879} \\
0.5 & 0.873 \\
0.7 & 0.865 \\
\bottomrule
\end{tabular}
\subcaption{$\epsilon_0$ sensitivity.}
\end{minipage}
\\
\begin{minipage}[t]{0.28\textwidth}
\centering
\begin{tabular}{cc}
\toprule
$\mathbf{\gamma}$ Value & Accuracy \\ 
\midrule
0.001 & 0.875 \\
0.005 & \textbf{0.879} \\
0.01 & 0.842 \\
0.05 & 0.857 \\
\bottomrule
\end{tabular}
\subcaption{$\gamma$ sensitivity.}
\end{minipage}
\begin{minipage}[t]{0.28\textwidth}
\centering
\begin{tabular}{cc}
\toprule
Calibration Size & Accuracy \\ 
\midrule
0.2 & \textbf{0.879} \\
0.4 & 0.874 \\
0.6 & 0.871 \\
0.8 & 0.873 \\
\bottomrule
\end{tabular}
\subcaption{Calibration size sensitivity.}
\end{minipage}

\end{table*}

To understand how \quorum allocates annotations, we inspect the learned bandit policy in~\cref{fig:bandit_dashboard}. The arm selection proportions image (left) show that the router initially explores all annotators uniformly, then progressively converges toward lower-cost annotators for routine instances, while reserving higher-cost annotators (both human and LLM) for instances where predicted difficulty is highest.

The feature importance analysis (right) shows that $b_t$, the remaining budget fraction, is by far the dominant routing signal, followed by contextual features derived from $m_t$ (the annotation mask) and linguistic components of $\phi(d_t)$ such as vocabulary richness. This confirms that routing decisions are primarily driven by budget state and annotation history rather than embedding-based features alone. In particular, the model learns to regulate both the overall annotation ratio and the human annotation ratio as explicit proxies for budget consumption and allocation pressure, allowing it to avoid premature depletion of high-cost annotations while still maintaining coverage on uncertain instances. The heatmap (center) further reveals arm-specific patterns: the human arm exhibits strong positive weights on budget and mask features, confirming that escalation to $H$ is triggered when prior annotations are insufficient or conflicting.

\subsection{Sensitivity analysis of \quorum}
\label{app:sensitivity}

To assess robustness to hyperparameter choices, we evaluate \quorum under different values of the main routing and aggregation parameters (\cref{tab:sensitivity}). Sensitivity experiments are performed on the AG's News dataset under the \textsc{dollars} setting with a total budget of \$450. Overall, performance remains relatively stable across configurations, suggesting that the framework is not highly sensitive to precise tuning.
The human escalation threshold $\tau_H$ has the largest impact: moderate values ($\tau_H=0.5$) achieve the best performance, whereas overly aggressive or overly conservative escalation leads to lower accuracy. This indicates that balancing reliance on LLMs and human supervision is critical.
Similarly, increasing the human voting weight $w_H$ improves performance up to an intermediate regime ($w_H=10$), confirming that prioritizing human annotations is beneficial, but excessive weighting yields diminishing returns.
Exploration parameters exhibit comparable behaviour. Moderate initial exploration ($\epsilon_0=0.3$) and slow decay ($\gamma=0.005$) provide the strongest results, whereas excessive exploration or premature exploitation degrade performance. These findings support the design choice of combining adaptive exploration with budget-aware routing.
Overall, the results suggest that \quorum is robust to moderate hyperparameter variation and does not require extensive tuning to obtain competitive performance. The calibration-size analysis suggests that allocating a relatively small portion of the budget to the initial calibration stage is sufficient to obtain reliable annotator estimates. Performance peaks when 20\% of the affordable budget is reserved for calibration, while larger calibration fractions gradually reduce accuracy. This indicates that excessive upfront calibration may limit the resources available for adaptive routing and iterative annotation refinement.

\subsection{Time Analysis of \quorum}
\label{app:time_analysis}
\Cref{tab:time} reports the routing overhead required by different routing strategies before annotation decisions can be made. Existing approaches relying on confidence estimation (CT) require an additional LLM call for every sample to obtain uncertainty signals, resulting in substantial computational costs. On datasets of roughly 2,000 instances, this process can require approximately two hours. Training-based methods (TT), such as ARAIDA and SANT, are also expensive due to supervised fitting procedures, typically requiring around 20 minutes for the same dataset size.

In contrast, \quorum only performs embedding extraction (ET) and lightweight feature computation, reducing preprocessing to approximately four minutes on datasets of similar size. Across benchmarks, \quorum consistently achieves lower computational overhead than confidence- or training-based alternatives while maintaining competitive annotation performance. These results suggest that replacing uncertainty estimation with feature-based difficulty signals improves not only routing effectiveness but also practical scalability.

\begin{table}[t]
    \centering
    \caption{Routing overhead (seconds) required before routing decisions across datasets ($\downarrow$). ET denotes embedding extraction time, CT confidence estimation time via additional LLM prompting, and TT model training time.}
    \label{tab:time}
    \resizebox{\linewidth}{!}{%
    \begin{tabular}{cccc}
    \toprule
    Method & AG's News & PubMed & Global-MMLU \\
    \midrule
    Random & 0.007 & 0.035 & 0.014 \\
    ARAIDA & 4.988+TT & 19.879+TT & 12.250+TT \\
    CDI & 0.503+CT & 0.547+CT & 0.661+CT \\
    CoAnnotating & 0.006+CT & 0.038+CT & 0.014+CT \\
    HyPAC & 0.046+CT & 0.150+CT & 0.075+CT \\
    SANT & 18.052+TT & 161.014+TT & 40.853+TT \\
    \approach & 0.494+ET & 1.467+ET & 0.593+ET \\
    \bottomrule
    \end{tabular}
    }
\end{table}

\subsection{Additional Multilingual Results}
\label{app:japanese_results}
To assess the generalizability of our approach beyond English, we evaluate all methods on Japanese and Spanish language tasks using Global-MMLU (ja, es) for multiple-choice Question Answering and XLSum (ja, es) for abstractive summarization. These datasets allow us to test whether the annotation strategies remain effective when applied to a typologically distant language with different linguistic properties.~\Cref{tab:japanese_auditor,tab:japanesedollars} report results under the \textsc{auditor style} and \textsc{dollars} settings, respectively. Our method consistently achieves the best or second-best performance across both tasks and budget levels, demonstrating that its effectiveness is not limited to English and transfers well to multilingual scenarios. Notably, the performance gap with competing methods widens at higher budgets: at $70\%$ human annotation, \quorum reaches $0.946$ accuracy on Global-MMLU (ja), outperforming the second-best method by nearly $4$ percentage points. These results confirm that our method generalizes effectively to typologically distant languages, even when LLM annotators are less reliable.

\begin{table*}[!t]
\centering
\caption{Performance in the \textsc{auditor style} setting under different human annotation budgets. We report Accuracy or ROUGE-1 depending on the task ($\uparrow$), Machine Cumulative Accuracy ($\uparrow$), and total annotation cost ($\downarrow$) for varying percentages of human-labeled data. \textbf{Bold} indicates the best-performing method, while \underline{underlined} the second best. Value in parentheses represents the average performance obtained by LLM annotators on that dataset.}\label{tab:japanese_auditor}
\resizebox{\linewidth}{!}{%
\begin{tabular}{llccc|ccc|ccc|ccc}
\toprule
\multirow{2}{*}{\textbf{Dataset}} & \multirow{2}{*}{\textbf{Method}} & \multicolumn{3}{c}{\textbf{10\%}} & \multicolumn{3}{c}{\textbf{30\%}} & \multicolumn{3}{c}{\textbf{50\%}} & \multicolumn{3}{c}{\textbf{70\%}} \\
\cmidrule(lr){3-5} \cmidrule(lr){6-8} \cmidrule(lr){9-11} \cmidrule(lr){12-14}
& & Accuracy/ROUGE & MCA & Cost & Accuracy/ROUGE & MCA & Cost & Accuracy/ROUGE & MCA & Cost & Accuracy/ROUGE & MCA & Cost \\
\midrule
\multirow{7}{*}{\shortstack{Global-MMLU (ja) \\ (LLM: 0.640) \\ 14,040 samples}}
& Random & $0.641 \pm 0.001$ & $0.305 \pm 0.013$ & $\underline{267}$ & $0.729 \pm 0.002$ & $0.313 \pm 0.007$ & $\underline{519}$ & $0.729 \pm 0.002$ & $0.165 \pm 0.003$ & $\underline{772}$ & $0.796 \pm 0.002$ & $0.243 \pm 0.002$ & $\underline{1025}$ \\
& ARAIDA & $0.679 \pm 0.014$ & $0.383 \pm 0.109$ & $281$ & $0.732 \pm 0.018$ & $0.312 \pm 0.012$ & $562$ & $0.759 \pm 0.052$ & $0.315 \pm 0.015$ & $842$ & $0.782 \pm 0.082$ & $0.318 \pm 0.018$ & $1123$ \\
& CDI & $0.691 \pm 0.001$ & $0.355 \pm 0.014$ & $281$ & $0.751 \pm 0.002$ & $0.318 \pm 0.006$ & $562$ & $0.823 \pm 0.002$ & $\underline{0.335 \pm 0.004}$ & $842$ & $0.893 \pm 0.002$ & $0.338 \pm 0.002$ & $1123$ \\
& CoAnnotating & $0.692 \pm 0.001$ & $0.358 \pm 0.001$ & $281$ & $\underline{0.760 \pm 0.001}$ & $\underline{0.348 \pm 0.001}$ & $562$ & $0.818 \pm 0.001$ & $0.323 \pm 0.001$ & $842$ & $\underline{0.909 \pm 0.001}$ & $\underline{0.361 \pm 0.001}$ & $1123$ \\
& HyPAC & $\underline{0.715 \pm 0.001}$ & $0.349 \pm 0.009$ & $281$ & $\mathbf{0.784 \pm 0.001}$ & $\mathbf{0.350 \pm 0.002}$ & $562$ & $\mathbf{0.838 \pm 0.001}$ & $0.331 \pm 0.002$ & $842$ & $0.899 \pm 0.010$ & $0.342 \pm 0.008$ & $1123$ \\
& SANT & $0.689 \pm 0.011$ & $\underline{0.389 \pm 0.072}$ & $281$ & $0.738 \pm 0.019$ & $0.307 \pm 0.044$ & $562$ & $0.761 \pm 0.038$ & $0.335 \pm 0.058$ & $842$ & $0.780 \pm 0.064$ & $0.293 \pm 0.010$ & $1123$ \\
& \approach & $\mathbf{0.721 \pm 0.001}$ & $\mathbf{0.394 \pm 0.008}$ & $\mathbf{267}$ & $0.748 \pm 0.001$ & $0.335 \pm 0.005$ & $\mathbf{519}$ & $\mathbf{0.857 \pm 0.001}$ & $\mathbf{0.364 \pm 0.003}$ & $\mathbf{772}$ & $\mathbf{0.946 \pm 0.001}$ & $\mathbf{0.400 \pm 0.002}$ & $\mathbf{1025}$ \\ \midrule
\multirow{7}{*}{\shortstack{Global-MMLU (es) \\ (LLM: 0.676) \\ 14,300 samples}}
& Random & $0.710 \pm 0.001$ & $0.289 \pm 0.011$ & $267$ & $0.788 \pm 0.001$ & $0.289 \pm 0.004$ & $519$ & $0.826 \pm 0.001$ & $0.260 \pm 0.002$ & $772$ & $0.853 \pm 0.001$ & $0.229 \pm 0.002$ & $1025$ \\
& ARAIDA & $0.727 \pm 0.013$ & $\underline{0.358 \pm 0.112}$ & $281$ & $0.777 \pm 0.013$ & $0.275 \pm 0.004$ & $562$ & $0.800 \pm 0.042$ & $0.278 \pm 0.004$ & $842$ & $0.822 \pm 0.069$ & $0.280 \pm 0.005$ & $1123$ \\
& CDI & $0.740 \pm 0.001$ & $0.295 \pm 0.009$ & $281$ & $0.795 \pm 0.001$ & $0.281 \pm 0.003$ & $562$ & $\underline{0.854 \pm 0.002}$ & $0.286 \pm 0.004$ & $842$ & $0.912 \pm 0.001$ & $\underline{0.288 \pm 0.001}$ & $1123$ \\
& CoAnnotating & $0.741 \pm 0.001$ & $0.303 \pm 0.001$ & $281$ & $\mathbf{0.802 \pm 0.001}$ & $\underline{0.303 \pm 0.001}$ & $562$ & $0.852 \pm 0.001$ & $0.283 \pm 0.001$ & $842$ & $0.910 \pm 0.001$ & $0.284 \pm 0.001$ & $1123$ \\
& HyPAC & $\underline{0.742 \pm 0.001}$ & $0.313 \pm 0.007$ & $281$ & $\underline{0.802 \pm 0.001}$ & $\mathbf{0.305 \pm 0.003}$ & $562$ & $0.855 \pm 0.001$ & $0.289 \pm 0.003$ & $842$ & $\mathbf{0.915 \pm 0.001}$ & $0.292 \pm 0.001$ & $1123$ \\
& SANT & $0.738 \pm 0.009$ & $0.324 \pm 0.072$ & $281$ & $0.798 \pm 0.011$ & $0.298 \pm 0.008$ & $562$ & $0.851 \pm 0.035$ & $\mathbf{0.310 \pm 0.013}$ & $842$ & $0.900 \pm 0.058$ & $\mathbf{0.308 \pm 0.011}$ & $1123$ \\
& \approach & $\mathbf{0.749 \pm 0.002}$ & $\mathbf{0.379 \pm 0.016}$ & $\mathbf{267}$ & $0.798 \pm 0.002$ & $0.290 \pm 0.006$ & $\mathbf{519}$ & $\mathbf{0.856 \pm 0.002}$ & $\underline{0.290 \pm 0.004}$ & $\mathbf{772}$ & $\underline{0.912 \pm 0.001}$ & $0.287 \pm 0.001$ & $\mathbf{1025}$ \\
\midrule
& & ROUGE-1 & MCA & Cost & ROUGE-1 & MCA & Cost & ROUGE-1 & MCA & Cost & ROUGE-1 & MCA & Cost \\ \midrule
\multirow{5}{*}{\shortstack{XLSum (ja) \\ (LLM: 0.166) \\ 880 samples}}
& Random & $0.243 \pm 0.003$ & - & \underline{17} & $0.388 \pm 0.006$ & - & \underline{32} & $\underline{0.532 \pm 0.005}$ & - & \underline{48}& $0.678 \pm 0.001$ & - & \underline{64} \\
& CDI & $0.247 \pm 0.001$ & - & 18 & $0.381 \pm 0.004$ & - & 34 & $0.529 \pm 0.004$ & - & 53 & $0.673 \pm 0.003$ & - & 71 \\
& CoAnnotating & $0.250 \pm 0.001$ & - & 18 & $\underline{0.395 \pm 0.001}$ & - & 34 & $0.541 \pm 0.001$ & - & 53 & $\underline{0.682 \pm 0.001}$ & - & 71 \\
& HyPAC & $\underline{0.255 \pm 0.004}$ & - & 18 & $0.367 \pm 0.003$ & - & 34 & $0.385 \pm 0.001$ & - & 53  & $0.405 \pm 0.003$ & - & 71 \\
& \approach & $\mathbf{0.258 \pm 0.002}$ & - & \textbf{17}  & $\mathbf{0.403 \pm 0.001}$ & - & \textbf{32} & $\mathbf{0.543 \pm 0.002}$ & - & \textbf{48} & $\mathbf{0.683 \pm 0.002}$ & - & \textbf{64} \\ \midrule
\multirow{5}{*}{\shortstack{XLSum (es) \\ (LLM: 0.229) \\ 4,763 samples}}
& Random & $0.301 \pm 0.001$ & - & 90 & $0.454 \pm 0.001$ & - & 176 & $0.609 \pm 0.001$ & - & 261 & $0.763 \pm 0.001$ & - & 347 \\
& CDI & $0.303 \pm 0.001$ & - & 95 & $0.456 \pm 0.002$ & - & 190 & $0.612 \pm 0.002$ & - & 285 & $0.768 \pm 0.001$ & - & 380 \\
& CoAnnotating & $\underline{0.304 \pm 0.001}$ & - & 95 & $0.459 \pm 0.001$  & - & 190 & $0.614 \pm 0.001$ & - & 285 & $0.768 \pm 0.001$ & - & 380 \\
& HyPAC & $0.304 \pm 0.001$ & - & 95 & $\underline{0.460 \pm 0.001}$  & - & 190 & $\underline{0.614 \pm 0.001}$ & - & 285 & $\underline{0.768 \pm 0.001}$ & - & 380 \\
& \approach & $\mathbf{0.307 \pm 0.001}$ & - & \textbf{90} & $\mathbf{0.461 \pm 0.001}$  & - & \textbf{176} & $\mathbf{0.616 \pm 0.001}$ & - & \textbf{261} & $\mathbf{0.771 \pm 0.001}$ & - & \textbf{347} \\
\bottomrule
\end{tabular}}
\end{table*}

\begin{table*}[!t]
\centering
\caption{Performance in the \textsc{dollars} setting under a fixed monetary budget. We report Accuracy or ROUGE-1 depending on the task ($\uparrow$) and Machine Cumulative Accuracy ($\uparrow$) under varying budget constraints. \textbf{Bold} indicates the best-performing method, while \underline{underlined} the second best. Row values indicate the available annotation budget, expressed as a percentage of the cost required to annotate the entire dataset using human annotators.}
\label{tab:japanesedollars}
\resizebox{\linewidth}{!}{%
\begin{tabular}{llcc|cc|cc|cc}
\toprule
\multirow{2}{*}{\textbf{Dataset}} & \multirow{2}{*}{\textbf{Method}} & \multicolumn{2}{c}{\textbf{10\%}} & \multicolumn{2}{c}{\textbf{30\%}} & \multicolumn{2}{c}{\textbf{50\%}} & \multicolumn{2}{c}{\textbf{70\%}} \\
\cmidrule(lr){3-4} \cmidrule(lr){5-6} \cmidrule(lr){7-8} \cmidrule(lr){9-10}
& & Accuracy/ROUGE & MCA & Accuracy/ROUGE & MCA  & Accuracy/ROUGE & MCA & Accuracy/ROUGE & MCA \\ \midrule
\multirow{7}{*}{\shortstack{Global-MMLU (ja) \\ (LLM: 0.640) \\ Human Cost: 1,404}}
& Random & $0.665 \pm 0.002$ & $0.313 \pm 0.008$ & $0.659 \pm 0.002$ & $0.343 \pm 0.007$ & $0.729 \pm 0.002$ & $0.245 \pm 0.003$ & $0.766 \pm 0.002$ & $0.243 \pm 0.002$ \\
& ARAIDA & $0.696 \pm 0.028$ & $0.354 \pm 0.109$ & $0.732 \pm 0.018$ & $0.312 \pm 0.012$ & $0.759 \pm 0.052$ & $0.315 \pm 0.015$ & $0.782 \pm 0.082$ & $0.318 \pm 0.018$ \\
& CDI & $0.726 \pm 0.002$ & $0.350 \pm 0.008$ & $0.751 \pm 0.002$ & $0.318 \pm 0.006$ & $0.823 \pm 0.002$ & $\underline{0.335 \pm 0.004}$ & $0.893 \pm 0.002$ & $0.338 \pm 0.002$ \\
& CoAnnotating & $0.727 \pm 0.001$ & $0.353 \pm 0.001$ & $0.760 \pm 0.001$ & $0.348 \pm 0.001$ & $0.818 \pm 0.001$ & $0.323 \pm 0.001$ & $\underline{0.909 \pm 0.001}$ & $\underline{0.361 \pm 0.001}$ \\
& HyPAC & $\underline{0.749 \pm 0.001}$ & $0.351 \pm 0.004$ & $\underline{0.784 \pm 0.001}$ & $\underline{0.350 \pm 0.002}$ & $\mathbf{0.838 \pm 0.001}$ & $0.331 \pm 0.002$ & $0.899 \pm 0.010$ & $0.342 \pm 0.008$ \\
& SANT & $0.719 \pm 0.021$ & $\underline{0.381 \pm 0.075}$ & $0.738 \pm 0.019$ & $0.307 \pm 0.044$ & $0.761 \pm 0.038$ & $\mathbf{0.335 \pm 0.058}$ & $0.780 \pm 0.064$ & $0.293 \pm 0.010$ \\
& \approach & $\mathbf{0.758 \pm 0.001}$ & $\mathbf{0.393 \pm 0.003}$ & $\mathbf{0.791 \pm 0.001}$ & $\mathbf{0.405 \pm 0.005}$ & $\underline{0.808 \pm 0.001}$ & $0.331 \pm 0.003$ & $\mathbf{0.916 \pm 0.001}$ & $\mathbf{0.382 \pm 0.002}$ \\ \midrule
\multirow{7}{*}{\shortstack{Global-MMLU (es) \\ (LLM: 0.676) \\ Human Cost: 143}}
& Random & $0.612 \pm 0.001$ & $0.305 \pm 0.044$ & $0.613 \pm 0.001$ & $0.270 \pm 0.043$ & $0.635 \pm 0.001$ & $0.185 \pm 0.038$ & $0.624 \pm 0.001$ & $0.206 \pm 0.014$ \\
& ARAIDA & $0.712 \pm 0.002$ & $\mathbf{0.844 \pm 0.002}$ & $0.714 \pm 0.002$ & $\mathbf{0.848 \pm 0.002}$ & $0.716 \pm 0.002$ & $\mathbf{0.557 \pm 0.002}$ & $0.718 \pm 0.002$ & $\underline{0.362 \pm 0.002}$ \\
& CDI & $0.712 \pm 0.001$ & $\underline{0.844 \pm 0.001}$ & $0.714 \pm 0.001$ & $\underline{0.848 \pm 0.001}$ & $0.716 \pm 0.001$ & $\underline{0.557 \pm 0.001}$ & $0.718 \pm 0.001$ & $0.362 \pm 0.001$ \\
& CoAnnotating & $\mathbf{0.727 \pm 0.001}$ & $0.376 \pm 0.001$ & $\underline{0.742 \pm 0.001}$ & $0.425 \pm 0.001$ & $\underline{0.771 \pm 0.001}$ & $0.450 \pm 0.001$ & $\underline{0.818 \pm 0.001}$ & $0.320 \pm 0.001$ \\
& HyPAC & $0.717 \pm 0.001$ & $0.286 \pm 0.025$ & $0.719 \pm 0.001$ & $0.298 \pm 0.021$ & $0.724 \pm 0.001$ & $0.301 \pm 0.019$ & $0.732 \pm 0.001$ & $0.303 \pm 0.009$ \\
& SANT & $0.712 \pm 0.001$ & $0.844 \pm 0.001$ & $0.714 \pm 0.001$ & $0.848 \pm 0.001$ & $0.716 \pm 0.001$ & $0.557 \pm 0.001$ & $0.718 \pm 0.001$ & $0.362 \pm 0.001$ \\
& \textbf{\approach} & $\underline{0.725 \pm 0.001}$ & $0.749 \pm 0.001$ & $\mathbf{0.806 \pm 0.001}$ & $0.745 \pm 0.001$ & $\mathbf{0.808 \pm 0.001}$ & $0.471 \pm 0.001$ & $\mathbf{0.829 \pm 0.001}$ & $\mathbf{0.386 \pm 0.002}$ \\ \midrule
& & ROUGE-1 & MCA & ROUGE-1 & MCA  & ROUGE-1 & MCA & ROUGE-1 & MCA \\ \midrule
\multirow{5}{*}{\shortstack{XLSum (ja) \\ (LLM: 0.166) \\ Human Cost: 88}}
& Random & $0.234 \pm 0.004$ & - & $0.298 \pm 0.006$ & -  & $0.432 \pm 0.005$ & -  & $0.508 \pm 0.001$ & -  \\
& CDI & $\underline{0.323 \pm 0.003}$ & -  & $0.381 \pm 0.004$ & -  & $\underline{0.529 \pm 0.004}$ & -  & $0.673 \pm 0.003$ & -  \\
& CoAnnotating & $0.326 \pm 0.001$ & -  & $\underline{0.393 \pm 0.001}$ & - & $0.521 \pm 0.001$ & -  & $\underline{0.673 \pm 0.001}$ & -  \\
& HyPAC & $0.319 \pm 0.004$ & -  & $0.367 \pm 0.003$ & -  & $0.385 \pm 0.001$ & -  & $0.405 \pm 0.003$ & -  \\
& \approach   & $\mathbf{0.330 \pm 0.005}$ & -  & $\mathbf{0.395 \pm 0.001}$ & -  & $\mathbf{0.543 \pm 0.002}$ & -  & $\mathbf{0.683 \pm 0.002}$ & - \\ \midrule
\multirow{5}{*}{\shortstack{XLSum (es) \\ (LLM: 0.229) \\ Human Cost: 476}}
& Random & $0.219 \pm 0.001$ & - & $0.222 \pm 0.001$ & - & $0.231 \pm 0.001$ & - & $0.246 \pm 0.002$ \\
& CDI & $0.228 \pm 0.002$ & - & $0.231 \pm 0.002$ & - & $0.236 \pm 0.002$ & - & $0.246 \pm 0.002$ & - \\
& CoAnnotating & $0.226 \pm 0.002$ & - & $0.228 \pm 0.002$ & - & $0.234 \pm 0.002$ & - & $0.251 \pm 0.002$ & - \\
& HyPAC & $\underline{0.243 \pm 0.002}$ & - & $\underline{0.249 \pm 0.002}$ & - & $\underline{0.261 \pm 0.002}$ & - & $\underline{0.284 \pm 0.001}$ & -\\
& \approach & $\mathbf{0.251 \pm 0.001}$ & - & $\mathbf{0.261 \pm 0.001}$ & - & $\mathbf{0.282 \pm 0.001}$ & - & $\mathbf{0.308 \pm 0.001}$ & -\\
\bottomrule
\end{tabular}}
\end{table*}

\end{document}